\tikzstyle{vertex}=[minimum size=10pt]
\tikzstyle{class green}=[fill=white, draw=black, inner sep=0pt, minimum size=10pt, regular polygon, regular polygon sides=3]
\tikzstyle{class blue}=[fill=white, draw=black, inner sep=0pt, minimum size=10pt, regular polygon, regular polygon sides=4]
\tikzstyle{class red}=[fill=white, draw=black, inner sep=0pt, minimum size=10pt, regular polygon, regular polygon sides=5]
\tikzstyle{sensor}=[fill=white, draw=black, inner sep=0pt, minimum size=10pt, regular polygon, regular polygon sides=6]
\newcounter{myexample}
\newcommand{\bp}[1]{\ensuremath{{#1}_{\updownarrow}}}
\newcommand{\R}{\mathbb{R}}
\newcommand{\N}{\mathbb{N}}
\renewcommand{\vec}[1]{{\bf #1}}
\renewcommand{\phi}{\varphi}
\newcommand{\mat}[1]{{\bf #1}}
\newcommand{\bigO}[1]{\ensuremath{\mathcal{O}\del{#1}}}
\newcommand{\Ppure}{\mathcal{P}_\mathrm{pure}}
\newcommand{\Ppareto}{\mathcal{P}_\mathrm{Pareto}}
\newcommand{\Popt}{\mathcal{P}_\mathrm{opt}}
\newcommand{\Vpure}{\mathcal{V}_\mathrm{pure}}
\newcommand{\Vpareto}{\mathcal{V}_\mathrm{Pareto}}
\newcommand{\Vopt}{\mathcal{V}_\mathrm{opt}}
\newcommand{\grad}{\nabla}
\newcommand{\mycomment}[1]{}
\DeclareMathOperator*{\Ex}{Ex}
\newcommand{\idvt}{\,{\rm I\kern-0.55em 1 }}
\definecolor{dark_purple}{rgb}{0.1, 0.0, 0.4}
\definecolor{dark_green}{rgb}{0.0,0.2,0.5}
\definecolor{dark_red}{rgb}{0.85,0, 0}
	\newcommand{\ds}[1]{\todo[inline,color=white!40,bordercolor=white]{\textcolor{teal}{\textbf{Dimitri:}\textmd{\;#1}}}} 
	\newcommand{\hh}[1]{\todo[inline,color=white!40,bordercolor=white]{\textcolor{red}{\textbf{Holger:}\textmd{\;#1}}}}
	\newcommand{\vh}[1]{\todo[inline,color=white!40,bordercolor=white]{\textcolor{violet}{\textbf{Vahid:}\textmd{\;#1}}}}
	\newcommand{\pb}[1]{\todo[inline,color=white!40,bordercolor=white]{\textcolor{purple}{\textbf{Peter:}\textmd{\;#1}}}}
	\newcommand{\td}[1]{\todo[inline,color=dark_red!10,bordercolor=white]{\textcolor{dark_red}{\textbf{ToDo:}\textmd{\;#1}}}} 
\newcommand{\ds}[1]{}
\newcommand{\hh}[1]{}
\newcommand{\vh}[1]{}
\newcommand{\pb}[1]{}
\newcommand{\td}[1]{}
\def\thm@space@setup{\thm@preskip=3pt
	\thm@postskip=3pt}
\title[Multi-Objective Approaches for Uncertain MDPs]{Multi-Objective Approaches to Markov Decision Processes with
	Uncertain Transition Parameters}
\author{Dimitri Scheftelowitsch}
\affiliation{
  \institution{Informatik IV, TU Dortmund}
  \city{Dortmund}
  \country{Germany}
}
\email{dimitri.scheftelowitsch@cs.tu-dortmund.de}
\author{Peter Buchholz}
\affiliation{
  \institution{Informatik IV, TU Dortmund}
  \city{Dortmund}
  \country{Germany}
}
\email{peter.buchholz@cs.tu-dortmund.de}
\author{Vahid Hashemi}
\affiliation{
  \institution{Saarland University -- Computer Science}
  \streetaddress{Saarland Informatics Campus}
  \city{Saarbr\"ucken}
  \country{Germany}
}
\email{hashemi@depend.uni-saarland.de}
\author{Holger Hermanns}
\affiliation{
  \institution{Saarland University -- Computer Science}
  \streetaddress{Saarland Informatics Campus}
  \city{Saarbr\"ucken}
  \country{Germany}
}
\email{hermanns@depend.uni-saarland.de}
\begin{document}

\begin{abstract}
	Markov decision processes (MDPs) are a popular model for
        performance analysis and optimization of stochastic systems. The
        parameters of stochastic behavior of MDPs are estimates 
        from empirical observations of a system; their values are
        not known precisely. Different types of MDPs with uncertain, imprecise
        or bounded transition rates or probabilities and rewards exist in the
        literature.

        Commonly, analysis of models with uncertainties amounts to searching for
        the most
	robust policy which means that the goal is to generate a policy with the
        greatest lower bound on performance (or, symmetrically, the lowest upper
        bound on costs). However, hedging against an unlikely worst case
        may lead to losses in other situations. In general, one is
	interested in policies that {\em behave well\/} in \emph{all} situations 
	which results in a multi-objective view on decision making.

	In this paper, we consider policies for the expected discounted reward 
	measure of MDPs with uncertain parameters. In particular, the approach 
	is defined for bounded-parameter MDPs
	(BMDPs)~\cite{DBLP:journals/ai/GivanLD00}. In this setting the worst, 
	best and average case performances of a policy are analyzed 
	simultaneously, which yields a multi-scenario multi-objective 
	optimization problem. The paper presents and evaluates approaches to 
	compute the {\em pure\/} Pareto optimal policies in the value vector 
	space.
\end{abstract}

\maketitle
\vh{Suggestion for a shorter title: "Multi-Objective Approaches to Bounded-Parameter Markov Decision Processes"}
\ds{In principle, I would agree, but this title seems somewhat catchier to me.
It is also better Googleable.}
\setboolean{long}{false}   

\section{Introduction}
\label{sec:Introduction}
\vh{I think the term ``uncertain MDPs'' used in the intro is not consistent with neither the title nor our defined model in the next section.}
\ifthenelse{\boolean{long}}{
Markov decision processes are a common tool to describe decision
situations in many different contexts, such as economy, artificial
intelligence and planning~\cite{Puterman94}. The general idea is to
specify a system by means of different \emph{states} in which it can
be, \emph{actions} which a decision maker can perform to affect the
(probabilistic) future behavior, and \emph{rewards} or \emph{costs} that
depend on the state and decision. After an action has been chosen, the
system changes its state depending on the action and the current state
but not on the past behavior; transitions are, in general, randomized
and defined by the system's properties.}
{
Markov decision processes (MDPs) are a common tool to describe decision
situations in many different contexts such as performance optimization
and planning~\cite{Puterman94,BeST16,BeM17,kolobov2012planning}. The general idea is to
specify a system by means of different \emph{states} in which it can
be, \emph{actions} which a decision maker can perform to affect the
(probabilistic) future behavior, and \emph{rewards} or \emph{costs} that
depend on the state and decision such as energy costs of a server starting up or
the amount of users served in a queue if a service is active.
After an action has been chosen, the system changes its state depending on the
action and the current state but not on the past behavior; transitions are, in
general, randomized and defined by the system's properties.
}
\ifthenelse{\boolean{long}}{
However, modeling a (physical or biological or artificial) system 
suffers from several limitations, one of the most important is the 
inherent loss of precision that is introduced by measurement errors and 
discretization artifacts which necessarily happen due to objective 
limitations of measuring equipment or due to incomplete knowledge about 
the system behavior. Thus, the real probability distribution for 
transitions is in most cases an uncertain value which is given by either 
external parameters or confidence intervals. For the latter, the Markov 
decision process model has been extended to bounded-parameter Markov 
decision processes (BMDPs)~\cite{DBLP:journals/ai/GivanLD00} or the 
slightly more general classes of MDPs with incomplete or uncertain 
transition probabilities~\cite{SaLa73,WhEl94}. In these classes of 
MDPs, best-case and worst-case policies for expected discounted reward  
measures have been considered. These policies result in upper and lower 
bounds for the value vector that contains the discounted accumulated 
reward gained after starting in the different states of the process. 
From a more abstract point of view, computing worst-case and best-case 
policies means solving the robust optimization problem for an uncertainty set
of MDPs.}

{However, modeling a physical or an artificial system suffers from several limitations. Most prominent is the 
inherent loss of precision that is introduced by measurement errors and 
discretization artifacts which necessarily happen due to incomplete knowledge 
about the system behavior. 
\vh{I find the following sentence obscure.}
\ds{Better?\vh{Not really. The Markovian assumption may hold even in case of modelling uncertainties and basically depends on the type of the properties.}}
So the true probability distribution to be 
associated with transitions is in most cases uncertain and instead can be given by either 
external parameters or confidence intervals. To account for the latter,
the MDP model has been extended to bounded-parameter MDPs (BMDPs)~\cite{DBLP:journals/ai/GivanLD00} or the slightly more general classes of MDPs with incomplete or uncertain 
transition probabilities~\cite{SaLa73,WhEl94}. In these classes of 
MDPs, best-case and worst-case policies for expected discounted reward  
measures have been considered. These policies result in upper and lower 
bounds for the value vector that contains the discounted accumulated 
reward gained after starting in the different states of the process. 
}
\ifthenelse{\boolean{long}}{
From the point of view of a potential decision maker, however, a robust 
solution that hedges against the worst possible realization of the 
uncertainty may be overly pessimistic. A decision maker is often 
interested in a solution that might be not fully robust, but instead 
could have an acceptable worst-case behavior while retaining good 
best-case and average-case (for a properly defined concept of 
``average'') performance. This property may be formalized in several 
alternative ways; most notably, there are several competing definitions 
of ``almost robust'' solutions~{\cite{KlaKST13}}. A further optimization 
goal in this context may be the probability of reaching a certain 
performance bound, under the assumption of some probability distribution 
over the uncertainty set. Unfortunately, the latter seems to be 
inherently hard~\cite{Sc15}.}

{Robust optimization techniques~\cite{DBLP:journals/mor/WiesemannKR13} are useful to derive
policies that hedge against model uncertainties. In particular, these
\emph{robust} policies 
optimize the expected discounted reward against the worst possible realization of the 
uncertainty. From the viewpoint of a potential decision maker, however, a robust 
solution may be overly pessimistic. A decision maker is often 
interested in a solution that might be not fully robust, but instead 
could have an acceptable worst-case behavior while retaining good 
best-case and average-case (for a properly defined concept of 
``average'') performance. This property may be formalized in several 
alternative ways; most notably, there are several competing definitions 
of ``almost robust'' 
solutions~\cite{KlaKST13}. 
A further optimization 
goal in this context may be the probability of reaching a certain 
performance bound, assuming some probability distribution 
over the uncertainty set. Unfortunately, the latter seems to be 
inherently hard, especially for uncertain MDPs~\cite{Sc15}.
}
\ifthenelse{\boolean{long}}{
Thus a promising approach considers not only one 
extreme performance measure (either the upper or the lower bound), but 
at least both bounds (and possibly the expected case, too) and 
optimize all of them simultaneously. This means doing 
\emph{multi-objective optimization} which yields several mutually 
incomparable, so-called \emph{non-dominated} policies for the uncertain 
MDP from which the user may choose the one which has the most suitable 
performance measures.

In this contribution, we develop methods to compute all pure 
non-dominated policies for a given uncertain MDP\@. Concretely, this is a 
problem from the domain of \emph{multi-scenario optimization} where one 
considers each policy in several \emph{scenarios}, i.\,e., problem 
instances that may arise. As we consider this as a multi-objective 
optimization problem, it is important to note that the number of 
optimal, i.\,e., mutually incomparable and non-dominated solutions may 
be exponentially high, which is a structural feature of the considered 
problem, and computing all of these solutions might take a large amount 
of time. However, for ``stationary'' decision-making problems that only 
rarely vary with time, we argue that the amount of time invested in 
finding an optimal policy is often negligible when compared to the 
actual performance of the policy implementation; a policy in a large system is
executed over the course of weeks, and investing several hours or even a day to
find that policy seems acceptable.
}

{
Thus a promising approach considers not only one 
extremal measure (either the upper or the lower bound), but 
at least both bounds, and possibly the expectation, and  
optimizes all of them simultaneously. This means doing 
\emph{multi-objective optimization} which yields several mutually 
incomparable, so-called \emph{non-dominated} policies for the uncertain 
MDP from which the user may select the one which has the most suitable 
performance measures. This may be in particular of interest if the differences
between the optimal policies in the different scenarios are sufficiently large
to consider different tradeoffs besides the obvious extremal points.
\vh{Check below!}
In this paper, we develop methods to compute all pure 
non-dominated policies for a given uncertain MDP in a specific uncertainty
setting which can be generalized to other notions of parameter uncertainty. This problem which is basically an 
instance of \emph{multi-scenario optimization} asks to compute policies which     
are optimal in the presence of trade-offs between several conflicting objectives.

It is worthwhile to mention that for the considered multi-objective optimization problem, 
the number of optimal, i.e., mutually incomparable and non-dominated solutions may 
be exponentially high, which is a structural feature of the problem, and computing all of 
these solutions might take a prohibitive amount of time. However, for ``stationary'' decision-making 
problems which only rarely vary with time, we argue that the amount of time invested in 
finding an optimal policy is often negligible when compared to the 
actual performance of the policy implementation.
}


\textbf{Related work.} 
\ifthenelse{\boolean{long}}{
For a general introduction on MDP theory and solution methods, we 
refer to the book~\cite{Puterman94}. Applications of MDPs and uncertainty
modeling in the performance optimization world are various and include decision
support in medical screening procedures~\cite{BeST16} and product line
design~\cite{BeM17}.
The extension to bounded-parameter 
Markov decision processes (BMDPs) is introduced and widely discussed 
in~\cite{DBLP:journals/ai/GivanLD00}. BMDPs are a specific subclass of 
MDPs with uncertain or imprecise transition rates proposed 
by~\cite{SaLa73} and~\cite{WhEl94}. MDPs with uncertainty have been 
extended even further, our results also apply to convex uncertainty sets 
discussed by~\cite{DBLP:conf/cav/PuggelliLSS13}. Until today, several 
aspects of MDPs with imprecise parameters are considered in the 
literature~\cite{DBLP:journals/ai/DelgadoSB11,DBLP:journals/mor/WiesemannKR13}.
However, in these cases, the goal 
is to compute some robust policy which assures the best possible 
behavior in the worst case.

We analyze MDPs with parameter uncertainty in the context of
multi-objective multi-scenario optimization which allows us to 
simultaneously optimize the worst, best and average case behavior. 
Applications and the general formulation of multi-scenario optimization 
are given in~\cite{Wiecek2009}. Multi-objective Markov decision 
processes are discussed in~\cite{WidJ07,White82,DBLP:conf/stacs/ChatterjeeMH06,DBLP:conf/icml/BarrettN08,DBLP:conf/ecai/PernyW10}. 
Furthermore, BMDPs can be considered a variant of stochastic games; the properties of stochastic games are 
discussed in the textbook~\cite{FiVr96}; the extension to 
multi-objective rewards in stochastic games is introduced 
by~\cite{DBLP:conf/mfcs/ChenFKSW13}.}
{
The Bounded-parameter MDP (BMDP) model is introduced and widely discussed 
in~\cite{DBLP:journals/ai/GivanLD00}. BMDPs are a specific subclass of 
MDPs with uncertain or imprecise transition probabilities proposed 
by~\cite{SaLa73} and~\cite{WhEl94}. \vh{The following sentence needs to be revised.}
The methods extend to more general notions of uncertainty in MDPs such as
convex uncertainty sets discussed by~\cite{DBLP:conf/cav/PuggelliLSS13}; further
aspects of parameter uncertainty in MDPs are covered
in~\cite{DBLP:journals/ai/DelgadoSB11,DBLP:journals/mor/WiesemannKR13,Iy05}.
\vh{Revised below!}
However, in almost all cases, the goal is to compute a robust policy 
which ensures the best possible behavior in the worst case.
In many scenarios however, we may have a multidimensional reward function and
hence search for a policy which simultaneously maximizes all reward dimensions.
To account for the latter, multi-objective Markov decision 
processes are discussed in~\cite{WidJ07,White82,DBLP:conf/stacs/ChatterjeeMH06,DBLP:conf/icml/BarrettN08,DBLP:conf/ecai/PernyW10}. 
The extension to multi-objective Markov decision processes under bounded
parameter uncertainty has recently been investigated in~\cite{HashemiQEST17}.
In this paper, we target another facet of multi-objective multi-scenario optimization for MDPs with parameter uncertainty where the goal is to 
simultaneously optimize the worst, best and average case behavior. 


There exist numerous applications of MDPs and uncertainty modeling in the
performance optimization world. They include decision support in medical
screening procedures~\cite{BeST16} and product line design~\cite{BeM17}. Applications and the general formulation of multi-scenario optimization 
are given in~\cite{Wiecek2009}. 
}

\textbf{Our Contribution.} 
\ifthenelse{\boolean{long}}{
This work considers bounded-parameter Markov decision processes as 
multi-scenario multi-objective optimization problems, where the 
individual scenarios correspond to the maximal, minimal, and average 
performance measures of a given policy in the uncertainty set given by 
the BMDP\@. The notion of average performance is given by slightly 
extending the existing BMDP formalism and defining a designated average 
MDP in the uncertainty set. Given this multi-objective optimization 
problem, we consider pure Pareto optimal policies and provide 
approaches to (heuristically) compute this set.

Our work differs from existing publications such as~\cite{WidJ07,White82,DBLP:conf/stacs/ChatterjeeMH06,DBLP:conf/icml/BarrettN08,DBLP:conf/ecai/PernyW10,DBLP:conf/mfcs/ChenFKSW13} 
on multi-objective Markov decision processes and stochastic games in one substantial 
aspect. In previous works, the multi-objective aspect was introduced by 
introducing additional reward components. In our case, the 
multi-objective aspect stems from the existence of several 
\emph{scenarios}, i.\,e., the upper and lower bounds for the 
performance under uncertainty. This especially means that not only the 
rewards contribute to the components of the objective function, but 
also the transition probabilities that maximize or minimize the 
performance metric. This problem seems to be substantially different, 
as the upper and lower reward bounds yield different Markov decision 
processes for which a single policy must be found. To the best of our 
knowledge, this perspective has not been considered in literature yet.

Our results here are two-fold. First, we provide an exact algorithm that 
computes the desired set of pure Pareto optimal policies, albeit at a 
(almost certainly) prohibitively high computational cost. Second, we 
design a heuristic that is efficient in the sense that it computes 
a set of mutually non-dominated policies that are likely to be Pareto 
optimal with reasonable time complexity. Third, we evaluate our 
heuristic: for small problem instances, we evaluate it against the 
exact algorithm, for larger problems, we compare its performance with a 
generic evolutionary multi-objective optimization method.}
{
\vh{Revised below! Please check.}
\ds{Looks good to me.}
\sloppy This work considers the multi-scenario multi-objective optimization problem for BMDPs  
where the multiple scenarios correspond to the maximal, minimal, and average 
performance measures of a given policy in the uncertainty set of BMDPs.
The notion of average performance is given by slightly 
extending the existing BMDP formalism and defining a designated average 
MDP in the uncertainty set. In particular, we define the average MDP by introducing a
probability distribution over the uncertainty set and deriving the expected
value. This has the advantage of possible further applications such as
percentile optimization.

In summary, the main contributions of this paper are threefold.
First, we provide an exact algorithm that computes the desired set of pure Pareto optimal policies, 
	albeit at a (almost certainly) prohibitively high computational cost. 
Second, we design a heuristic that is efficient in the sense that it computes a set of mutually non-dominated policies 
	that are likely to be Pareto optimal with reasonable time complexity.
Finally, we develop a prototype tool and apply it to some case studies to show the effectiveness of our approach.
}

\section{Mathematical Preliminaries}
\label{sec:Preliminaries}

We start with common notations and formalisms used in this paper.
 
For a matrix $M$, we denote by $m_{i,j}$ the entry
in row $i$ and column $j$. Vector identifiers, such as $\vec v$, appear
in bold script, to distinguish them from scalars and matrices. For a
natural number $n$, we designate  by $[n]$ the set $\cbr{1, 2, \ldots,
	n}$. For multi-dimensional identifiers, such as matrices or vectors,
the order relations $\le$ or $\ge$ 
are performed componentwise. Additionally, we define $\vec{v} >_P \vec{v}'$ to
be short for $\vec{v} \ge \vec{v}' \land \vec{v} \neq \vec{v}'$.
\ifthenelse{\boolean{long}}{%
  \input{preliminaries-long}
}%
{


We next motivate and formally introduce the modelling formalism considered in this paper.





In a Markov decision process, transition probabilities and rewards are estimates
resulting from measurements or expert opinion. This implies that there is always
uncertainty about the parameters of the model and also about the behavior of the
real system according to some policy that has been derived from the MDP. The
class of stochastic bounded-parameter MDPs (SBMDPs) includes this uncertainty by
considering intervals rather than point estimates for the parameters of MDPs and
defining a probability distribution over the uncertainty set. We shall use the
probability distribution in order to derive the ``average'' MDP and then work
with this MDP; for the sake of generality, we would like to define the formalism here 
completely with the probability distribution.

%


\begin{definition}[\textbf{Stochastic bounded-parameter Markov process}]
	\label{def:sbmdp}
	A \emph{stochastic bounded-parameter Markov
		decision process} (SBMDP) is a tuple $(S, A,  T_{\updownarrow}, 
	R_{\updownarrow}, Pr)$ where $S = [n]$ is a (finite) state space, $A = \sbr{m}$ is a (finite) set of
	\emph{actions}, $T_{\updownarrow} =
	((P_{\downarrow}^1,P_{\uparrow}^1),\ldots,(P_{\downarrow}^m,P_{
	\uparrow}^m))$ is a set of $m$ matrix pairs where for each $a\in A$, $0 \le P^a_{\downarrow} \le P^a_{\uparrow}$,
    $P^a_{\downarrow}\idvt \le \idvt \le P^a_{\uparrow}\idvt$, $R_{\updownarrow}
	= ((\vec r_{\downarrow}^1, \vec r_{\uparrow}^1),\ldots, (\vec
	r_{\downarrow}^m, \vec r_{\uparrow}^m))$ is a set of $m$ reward
	vector pairs where for each $a\in A$, $0 \le {\vec r}^a_{\downarrow} \le
        {\vec r}^a_{\uparrow}$ and $Pr = \cbr{(p_{a,r} \colon \R^n \to \R,
        p_{a,P} \colon R^{n,n} \to \R) \mid a \in A}$ is a probability measure
        on the rewards and the transition matrices for each
	action.
\end{definition}
We denote by $\vec p^a_{\updownarrow s}$  all vectors $\vec p_s^a$ such that
$p_{\downarrow s, s'}^a \le p_{s,s'}^a \le p_{\uparrow s, s'}^a$ for all
$s' \in [n]$ and $\sum_{s'=1}^n p_{s,s'}^a = 1$. Similarly $\vec r^a \in
\vec r_{\updownarrow}^a$ specifies all vectors $\vec r^a$ such that $
r_{\downarrow s}^a \le r_s^a \le r_{\uparrow s}^a$.  

It is worthwhile to note that the model of SBMDPs extends the formalism of BMDPs
introduced by~\cite{DBLP:journals/ai/GivanLD00} with a probability measure on
the possible transition matrices and reward vectors, so as to enable to take the
``average performance'' into consideration by deriving expected values for the
transition matrices and rewards. This is different from the expected value of
the value vector under the probability distribution, as here the expectation
operator is applied on the model and serves to define an additional optimization
objective, in addition to the upper and lower bounds for the value vectors.

To optimize the performance of a (SB)MDP, a \emph{decision rule} or \emph{policy}
is needed. Formally, a policy is a function $f \colon S^\N \to
\Delta(A)$ where $S^\N$ is the set of (finite) \emph{histories} of states and
$\Delta(A)$ is the set of probability distributions on $A$.
We call a policy $f$ \emph{pure} if it is
stationary, i.e., it depends only on the current state, and deterministic, i.e,
it always maps a history to a Dirac distribution, i.e., $f(\cdot, a) \in
\cbr{0,1}$.
We use $f$ for general policies and $\pi$ for stationary and pure policies. 
We denote by $\pi(s) \in A$ the action that is chosen in state $s$ using policy
$\pi$, if $\pi$ is pure. 

Moreover, a stationary policy $\pi$ induces a Markov reward process with 
transition matrix $P^{(\pi)}$ and reward vector $\vec r^{(\pi)}$. 
For a transition matrix $P$ and state $s$, we denote by $\vec p_s$ the $s$-th row of $P$.
Accordingly, we denote by 
$\vec p_s^{\pi(s)}$ the $s$-th row of
$P^{(\pi)}$ and by $r_s^{\pi(s)}$ the reward of state $s$ under policy $\pi$. For
pure policies, we define a Hamming distance measure $d$
by
$d(\pi, \pi') = |{ \cbr{s \in S \mid \pi(s)
\neq \pi'(s)} }|$.

The \emph{expected discounted reward} for an infinite horizon is taken as 
performance metric. 
For a MDP, a given policy
$f$ defines a sequence of transition matrices $(P^{(f)}_t)_{t \in
\N}$ and reward vectors $(\vec r^{(f)}_t)_{t \in \N}$ where $P^{(f)}_t$
resp.\ $\vec r^{(f)}_t$ is the transition matrix resp.\ the reward in the
$t$-th time step. Using this sequence, we derive a probability
distribution on sequences of states and corresponding rewards, from
which the expected discounted reward is defined by 
$\Ex \sum_{t \in \N} \gamma^{t - 1} \vec{r}^{(f)}_t$.

A \emph{value vector} $\vec v$ collects the computed expected discounted reward for all states where each entry 
equals the expected discounted reward one obtains starting from the respective state.
The mathematical properties of expected discounted rewards on which we
shall rely in the sequel are widely discussed by~\cite{Puterman94}.

We assume that the goal is the maximization of the expected discounted
reward. 
For a SBMDP policies assuring best and worst case behavior can be
determined by solving the following Bellman equations for the
\emph{value vectors} $\vec v_\downarrow, \vec v_\uparrow$.
\begin{equation}
\label{eq:upper-lower-bound}
\begin{array}{llll}
v_{\downarrow s} & = & 
\max_{a \in A} \left(r_{\downarrow s}^a + \gamma
\min_{\vec p_{s}^a \in  \vec p^a_{\updownarrow s}} \left(
\vec p_{s}^a \vec v_{\downarrow}  \right) \right)\\
v_{\uparrow s} & = & 
\max_{a \in A} \left(r_{\uparrow s}^a + \gamma
\max_{\vec p_{s}^a \in \vec p^a_{\updownarrow s}} \left(
\vec p_{s}^a \vec v_{\uparrow}  \right) \right)
\end{array}
\end{equation}
As shown by~\cite{DBLP:journals/ai/GivanLD00}, 
the optimal policies are pure. We denote them by $\pi_{\downarrow}$ and $\pi_{\uparrow}$, respectively.  

Additionally, we are interested in the ``average-case'' properties of 
SBMDPs and a policy that maximizes the performance under the ``average-case'' assumption. 
The reasoning here is fairly straightforward: Since the probability distribution for 
the transition matrices is known, we can define for each action $a \in 
A$, $\overline{P}^a = \Ex \sbr{ P^a } = \int_{P^a_{\updownarrow}}
P \cdot p_{a,P}(P) \dif P$ and $\overline {\vec r}^a = \Ex[\vec r^a] =
\int_{{\vec r}^a_{\updownarrow}} {\vec r} \cdot p_{a,r}({\vec r}) \dif
{\vec{r}}$, where $p_a$ are probability densities on transition matrices and
reward vectors for
action $a$. The Bellman equations become
\begin{equation}
\label{eq:average}
\begin{array}{lll}
\overline {v}_s & = & 
\max_{a \in A} \left(\overline{r}_{s}^a + \gamma
{\overline {\vec p}}_{s}^a \overline {\vec v}\right)
\end{array}
\end{equation}
which is a standard MDP problem. The optimal policy is pure and is
denoted as $\overline{\pi}$. For a policy $f$, by
$\vec{v}_{\downarrow}^{(f)}$, $\vec v_{\uparrow}^{(f)}$ and $\overline
{\vec v}^{(f)}$ we designate the lower, upper and average value vectors,
furthermore, the triple $(\vec{v}_{\downarrow}^{(f)},
\overline{\vec{v}}^{(f)}, \vec{v}_{\uparrow}^{(f)})$ is designated by
$\vec{v}^{(f)}$, if not mentioned otherwise. Obviously
$\vec{v}_{\downarrow}^{(f)} \le \overline {\vec v}^{(f)} \le
\vec{v}_{\uparrow}^{(f)}$ and also $\vec{v}_{\downarrow} \le
\overline{\vec v} \le \vec v_{\uparrow}$ hold, as $\vec{v}_{\downarrow}^{(f)}$
and $\vec{v}_{\uparrow}^{(f)}$ are the minimal resp. maximal value vectors for
$f$ over all MDPs in the uncertainty set. Usually 
$\pi_{\downarrow}$, $\pi_{\uparrow}$, and $\overline \pi$ differ and a
compromise between two or all three goals has to be found. 

Before we discuss specifics, we would like to mention a possible alternative
approach in this setting. It is possible to consider the SBMDP formalism
in a Bayesian setting, where the information from the probability distribution
over the uncertainty set of MDPs is taken into account completely, thus seeking
for $\max_{f} \Ex_{M} \vec{v}_{M}^{(f)}$ where $\vec{v}_{M}^{(f)}$ is the value
vector of the policy $f$ in the MDP $M$. This approach can be undertaken in the
case where it is possible to integrate over the probability distribution over
$\bp{M}$ easily.

Here, we explicitly choose the multi-objective approach mainly for the reason
of larger applicability. Even if a Bayesian approach can be motivated from the
standpoint of stochastic BMDPs, one can easily see that any reasonably good
solution in the multi-objective setting can be transferred to ``plain'' BMDPs and, furthermore, to
multi-objective MDPs even without introducing a probability distribution over
the uncertainty set.\looseness-1

%
%
%

As we shall consider multi-objective optimization problems, we define 
the set of solutions that interest us.
\begin{definition}[\textbf{Pareto Optimality}]
\label{def:ParetoOptimality}
Let $\Ppure$ be the set of pure policies $\pi \colon S \to A$, then we define
the \emph{Pareto frontier} as
\[
\begin{array}{l}
\Ppareto = \Big\{\pi \in \Ppure \mid
\not\exists \pi' \in \Ppure: 
\vec{v}^{(\pi')} >_P \vec{v}^{(\pi)} 
\Big\}
\end{array}
\]
the set of pure policies that result in Pareto optimal value vectors. We
denote by $\Vpareto$ and $\Vpure$ the corresponding sets of value 
vectors.
\end{definition}
It is easy to see that 
$\Ppareto \subseteq \Ppure$ holds and usually, the subsets are proper. For
convenience, we shall use an operator $PO(\cdot)$ that
operates on sets with a partial order $\ge_P$ and is defined by $PO(X) = \cbr{x
\in X \mid \neg\exists y \in X : y >_P x}$.

}

\section{Computation of Optimal Policies}
\label{sec:algo}
\ifthenelse{\boolean{long}}{
  \input{solution-approach-long}
}%
{
The computation of Pareto optimal policies in multi-objective MDPs is 
challenging as the number of optimal policies can be large or even
infinite (for non-stationary policies). Therefore, algorithms are required to approximate the 
Pareto set efficiently instead of computing the whole set of Pareto 
optimal policies. Several works have already explored methods to compute or 
approximate the Pareto set, e.g.,~\cite{DBLP:conf/ecai/PernyW10,DBLP:conf/icml/BarrettN08,DBLP:conf/stacs/ChatterjeeMH06,White82,WidJ07,Roijers:2014:LSM:2615731.2617454}
albeit for the computation of several expected values or in a setting that 
has only a multi-objective reward. In SBMDPs we have to face 
optimization of, ultimately, several MDPs with related but not 
identical transition probabilities; furthermore, we consider the worst 
and best case vectors. This seems to be a different, and generally harder problem. 

For most non-trivial models the number of Pareto optimal policies is 
still much too large to compute them all and many policies show a 
similar behavior. From a practical point of view it is sufficient 
to compute a subset of the Pareto frontier if the corresponding 
value vectors are evenly distributed in the value vector space. 
Theoretically, one can try value iteration or policy iteration-based 
approaches to compute policies from $\Ppareto$ and the corresponding 
value vectors from $\Vpareto$, as in~\cite{WidJ07}. The disadvantage of 
value iteration algorithms is that the number of intermediate 
partly evaluated policies can become enormous before the value iteration 
converges and the policies get completely evaluated. In this case, one 
has to stop the algorithm with a large number of non-comparable policies 
which might even not be optimal since value iteration provided only 
an approximation or a lower bound of the true value vectors. Thus, we 
consider a policy iteration-based approach.

In the following, we present two algorithms. The first algorithm 
computes $\Ppareto$ exactly, albeit at a possibly high computational 
cost. The second algorithm is a fixed-point iteration that computes an 
increasing set of mutually non-dominating policies. Our first approach 
relies on the following result. 
\ifthenelse{\boolean{final}}{
  The proofs of this and the following results can be found in the online
  companion~\cite{SBHH17online}.
}{}

\begin{restatable}{lemma}{lemmaNondominatedPath}
	\label{lma:non-dominated-path}
	Let $\mathcal{P} = (S, A, T_{\updownarrow}, R_{\updownarrow}, Pr)$ be 
	a SBMDP\@.
	Let furthermore $\pi, \pi'$ be two policies where $\pi'$ lies on the
	Pareto frontier. Then there exists a finite sequence of policies $\pi =
	\pi_0, \pi_1, \ldots, \pi_N = \pi'$ where $d(\pi_i, \pi_{i+1}) = 1,
	\vec{v}^{(\pi_i)} \not> \vec{v}^{(\pi_{i+1})}$ and, additionally, $N \le 
	|S|$.
\end{restatable}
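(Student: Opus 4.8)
The plan is to prove the statement by induction on the Hamming distance $d(\pi,\pi')$, never touching a state outside the disagreement set $D = \{s \in S \mid \pi(s) \neq \pi'(s)\}$. If we only ever change states in $D$, flipping each such state once from $\pi(s)$ to $\pi'(s)$, then the path has length exactly $d(\pi,\pi') \le |S|$, which already settles the bound $N \le |S|$. (I read the condition $\vec v^{(\pi_i)} \not> \vec v^{(\pi_{i+1})}$ as the non-domination $\vec v^{(\pi_i)} \not>_P \vec v^{(\pi_{i+1})}$, which is what I will establish.) The whole argument then reduces to a single \emph{first-step claim}: whenever $\pi \neq \pi'$ and $\pi'$ is Pareto optimal, there is a state $s \in D$ such that the policy $\pi_1$ obtained from $\pi$ by setting $\pi_1(s) = \pi'(s)$ satisfies $\vec v^{(\pi)} \not>_P \vec v^{(\pi_1)}$. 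Granting this, I finish by induction: $\pi_1$ and $\pi'$ differ in $d(\pi,\pi')-1 \le |S|-1$ states and $\pi'$ is still Pareto optimal, so the induction hypothesis yields a valid remaining path, and prepending the step $\pi \to \pi_1$ gives the claim.

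The key tool is a per-objective single-state monotonicity (policy improvement) result applied separately to the three value vectors $\vec v_\downarrow$, $\overline{\vec v}$, $\vec v_\uparrow$. For a fixed policy $\rho$ and a fixed objective $o$, let $L_\rho$ denote the corresponding fixed-policy version of the Bellman operator in (\ref{eq:upper-lower-bound})/(\ref{eq:average}) (for the worst case, $(L_\rho \vec v)_s = r^{\rho(s)}_{\downarrow s} + \gamma \min_{\vec p \in \vec p^{\rho(s)}_{\updownarrow s}} \vec p\,\vec v$, and symmetrically for the other two). Each $L_\rho$ is monotone and a $\gamma$-contraction whose unique fixed point is the objective-$o$ value vector of $\rho$ \cite{DBLP:journals/ai/GivanLD00,Puterman94}. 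From this I extract the sub-lemma: if $\pi'$ is strictly better than $\pi$ in objective $o$ in some state, then some flip $s \in D$ strictly increases $\vec v_o$. Indeed, if every flip were non-improving for $o$, then all single-state advantages $\bigl(L_{\pi'}\vec v_o^{(\pi)}\bigr)_s - \bigl(\vec v_o^{(\pi)}\bigr)_s$ would be $\le 0$; since $\pi$ and $\pi'$ agree off $D$, this yields $L_{\pi'}\vec v_o^{(\pi)} \le \vec v_o^{(\pi)}$, and monotone iteration forces $\vec v_o^{(\pi')} \le \vec v_o^{(\pi)}$, a contradiction. The same operator argument shows that a strictly positive advantage at $s$ raises $\vec v_o$ strictly in state $s$ (and nowhere lowers it), whereas a zero advantage leaves $\vec v_o$ unchanged.

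With this in hand, the first-step claim follows from Pareto optimality of $\pi'$. Since $\pi'$ is Pareto optimal, $\pi$ cannot dominate it, so $\vec v^{(\pi)} \not>_P \vec v^{(\pi')}$; hence either (a) $\vec v^{(\pi)} = \vec v^{(\pi')}$, or (b) $\pi'$ is strictly better than $\pi$ in some objective $o^*$ and some state. In case (b) I apply the sub-lemma to $o^*$ to obtain a flip $s^* \in D$ that strictly increases $\vec v_{o^*}$; the resulting $\pi_1$ is then strictly better than $\pi$ in at least one component, so $\vec v^{(\pi)} \not\ge \vec v^{(\pi_1)}$ and in particular $\vec v^{(\pi)} \not>_P \vec v^{(\pi_1)}$. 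In case (a) the value vectors coincide, so $\vec v_o^{(\pi)}$ is already a fixed point of $L_{\pi'}$ for every objective $o$; all advantages vanish, every single flip leaves all three value vectors unchanged, and thus $\vec v^{(\pi)} \not>_P \vec v^{(\pi_1)}$ holds trivially (and the recursion stays inside case (a)).

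The main obstacle I anticipate is the single-state monotonicity for the worst- and best-case value vectors. There the transition is resolved by an inner minimization resp.\ maximization, so the clean linear resolvent identity $\vec v^{(\pi_1)} - \vec v^{(\pi)} = (I-\gamma P^{(\pi_1)})^{-1}\vec a$ (with $\vec a$ the advantage vector) available in the average, standard-MDP case is not directly applicable. My intended remedy is to argue entirely at the level of the monotone contraction operators $L_\rho$ — using the implication $L_\rho \vec u \le \vec u \Rightarrow \vec v^{(\rho)}_o \le \vec u$ and its dual — rather than through the resolvent, which keeps the three objectives uniform. A secondary point requiring care is the degenerate equal-value case (a), where I must confirm that a flip of identically zero advantage really fixes the value vector; this is immediate once one checks that $\vec v_o^{(\pi)}$ remains a fixed point of $L_{\pi_1}$.
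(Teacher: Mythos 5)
Your proposal is correct, and its skeleton --- induction on the Hamming distance, producing one non-dominated flip toward $\pi'$ inside the disagreement set and recursing, giving $N = d(\pi,\pi') \le |S|$ --- is exactly that of the paper's proof. Where you genuinely diverge is in how the one-step claim is established. The paper argues by contradiction: assuming every flip $\pi^{(s,\pi'(s))}$ is strictly dominated by $\pi$, it builds a restricted two-action SBMDP $\mathcal{P}^{[\pi,\pi']}$ whose only actions are those used by $\pi$ and $\pi'$, asserts that local optimality of $\pi$ in each of the three components implies global optimality in $\mathcal{P}^{[\pi,\pi']}$, and concludes that $\pi'$ is dominated, contradicting its Pareto membership. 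You instead work directly with the fixed-policy Bellman operators: non-positive advantages on the disagreement set give $L_{\pi'}\vec{v}_o^{(\pi)} \le \vec{v}_o^{(\pi)}$, hence $\vec{v}_o^{(\pi')} \le \vec{v}_o^{(\pi)}$ by monotone iteration. This is the same local-to-global principle, but made explicit for the robust min/max components, where the linear resolvent identity of classical policy improvement is unavailable and the paper merely asserts the step (``locally optimal \ldots{} thus optimal''). Your route buys two things: a uniform, rigorous treatment of all three objectives at the operator level, and an explicit handling of the degenerate case $\vec{v}^{(\pi)} = \vec{v}^{(\pi')}$, which the paper's concluding step quietly skips --- optimality of $\pi$ in $\mathcal{P}^{[\pi,\pi']}$ yields only $\vec{v}^{(\pi)} \ge \vec{v}^{(\pi')}$, and since the frontier is defined via strict domination $>_P$, equality alone is no contradiction; your observation that vanishing advantages make every flip value-preserving closes precisely this corner. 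What the paper's construction buys in exchange is compactness: the three objectives are bundled into a single optimality statement about one auxiliary two-action model, rather than three separate operator arguments plus a case split.
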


From this result, we develop a procedure that computes $\Ppareto$ as sketched in
Algorithm~\ref{alg:ppareto-bfs}. Intuitively, the algorithm generates for each
currently considered policy possible non-dominated neighbor policies and updates
a global set of non-dominated policies. This continues for $|S|$ steps.

\begin{algorithm}[!t]
	\small
	\caption{Exact computation of $\Ppareto$ and $\Vpareto$}
	\label{alg:ppareto-bfs}
	\begin{algorithmic}[1]
		\Function{Pure-opt-exact\;}{$\mathcal{P} = (S, A, \bp{T}, \bp{R}, Pr), 
			\gamma$}
		\State $P \gets \cbr{\text{arbitrary policy}}$ \Comment
		initialize current policy set 
		\State $F \gets P$ \Comment initialize the Pareto frontier
		\For{$i \in \cbr{1, \ldots, |S|}$}
		\State $P \gets \cup_{\pi \in P} \cbr{ \pi' \mid d(\pi, \pi') = 
			1, \vec{v}^{\pi} \not> \vec{v}^{\pi'} }$
		\State $F \gets PO(F \cup P)$\label{alg:ppareto-bfs-updatefrontier}
		\EndFor
		\State \Return{$F$}
		\EndFunction
	\end{algorithmic}
\end{algorithm}
\begin{restatable}{theorem}{CorrectnessExactAlgorithm}
\label{lem:CorrectnessExactAlgorithm}
Algorithm~\ref{alg:ppareto-bfs} correctly computes $\Ppareto$. 	
\end{restatable}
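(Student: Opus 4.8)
The plan is to prove the two inclusions $\Ppareto \subseteq F$ and $F \subseteq \Ppareto$ for the set $F$ returned by the algorithm, where the first is the substantive one and rests on Lemma~\ref{lma:non-dominated-path}. First I would record a bookkeeping fact about the pruning operator. Writing $P_0 = \{\pi_0\}$ for the initial policy, $P_i$ for the value of $P$ after the $i$-th iteration, and $\mathcal{E} = P_0 \cup P_1 \cup \cdots \cup P_{|S|}$ for the set of all policies ever generated, I claim that the returned set satisfies $F = \PO(\mathcal{E})$. This follows by induction on the loop from the absorption identity $\PO(\PO(A) \cup B) = \PO(A \cup B)$, which holds in any finite partial order: every element discarded in forming $\PO(A)$ is dominated by an element that survives into $\PO(A)$, hence it cannot reappear in $\PO(A \cup B)$, and conversely a globally non-dominated element of $A \cup B$ survives both prunings. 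Thus line~\ref{alg:ppareto-bfs-updatefrontier} never loses a policy that is non-dominated within $\mathcal{E}$ once it has been generated.

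Next comes the key reachability claim: every $\pi^{\ast} \in \Ppareto$ lies in $\mathcal{E}$. To establish it, apply Lemma~\ref{lma:non-dominated-path} with the initial policy $\pi_0$ and the Pareto-optimal target $\pi^{\ast}$; it yields a sequence $\pi_0, \pi_1, \ldots, \pi_N = \pi^{\ast}$ with $N \le |S|$, $d(\pi_i,\pi_{i+1}) = 1$, and $\vec{v}^{(\pi_i)} \not> \vec{v}^{(\pi_{i+1})}$. I would then show by induction on $i$ that $\pi_i \in P_i$: the base case $\pi_0 \in P_0$ is immediate, and in the inductive step $\pi_{i+1}$ is a Hamming-distance-one neighbor of $\pi_i \in P_i$ satisfying exactly the expansion condition $\vec{v}^{(\pi_i)} \not> \vec{v}^{(\pi_{i+1})}$ used to define $P_{i+1}$. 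Since $N \le |S|$, the loop runs long enough that $\pi^{\ast} = \pi_N \in P_N \subseteq \mathcal{E}$. The crucial point is that the algorithm's pruning rule is tailored precisely to match the hypothesis of the lemma, so no policy on a guaranteed non-dominated path is ever pruned away.

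The two inclusions then follow. For $\Ppareto \subseteq F$: any $\pi^{\ast} \in \Ppareto$ lies in $\mathcal{E}$ and, being non-dominated among all pure policies, is a fortiori non-dominated within $\mathcal{E}$, so $\pi^{\ast} \in \PO(\mathcal{E}) = F$. For $F \subseteq \Ppareto$: suppose some $\pi \in F$ were not Pareto optimal. Because $\Ppure$ is finite, the strict-dominance relation above $\pi$ has a $>_P$-maximal element $\pi^{\ast}$, which is globally Pareto optimal and satisfies $\vec{v}^{(\pi^{\ast})} >_P \vec{v}^{(\pi)}$ by transitivity; by the reachability claim $\pi^{\ast} \in \mathcal{E}$, so $\pi$ is dominated within $\mathcal{E}$ and hence $\pi \notin \PO(\mathcal{E}) = F$, a contradiction. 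Combining the inclusions gives $F = \Ppareto$.

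The main obstacle I anticipate is not any single inclusion but making the reachability induction airtight: I must confirm that the \emph{non-accumulating} frontier $P_i$ (as opposed to the accumulating $F$) still contains $\pi_i$ at exactly step $i$, and that the bound $N \le |S|$ supplied by the lemma is precisely what guarantees the finite loop suffices. A secondary technical point is the transitivity-and-finiteness argument ensuring that every non-optimal policy is strictly dominated by an element of $\Ppareto$, not merely by some other non-optimal policy; this is what lets the second inclusion close.
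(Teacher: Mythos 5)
Your proof is correct and takes essentially the same route as the paper's: both rest on Lemma~\ref{lma:non-dominated-path} to guarantee that every policy in $\Ppareto$ is generated along a non-dominated, distance-decreasing path within the $|S|$ iterations of the loop, with the $\PO$-pruning in line~\ref{alg:ppareto-bfs-updatefrontier} retaining exactly the globally non-dominated policies among all those ever generated. You merely make explicit two points the paper leaves implicit — the absorption identity $\PO(\PO(A)\cup B)=\PO(A\cup B)$ yielding $F=\PO(\mathcal{E})$, and the soundness inclusion $F\subseteq\Ppareto$ via a $>_P$-maximal dominator in the finite set $\Ppure$ — which tightens rather than changes the argument.
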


The complexity of the exact algorithm is, as of yet, unclear. In the worst
possible case
the algorithm will produce large numbers of temporarily optimal policies
that will be dominated by a few Pareto-optimal policies in the end,
making the worst-case complexity $\bigO{m^n}$, theoretically independent
of the final size of $\Ppareto$. To avoid this, we propose a slightly
different algorithm that computes a set of policies that seem to be a
reasonably good approximation of $\Ppareto$. It is important to note
that the following approach is an heuristic; later, we shall discuss the
(empirical) quality of this heuristic.

The heuristic approach is a simplification of the
Algorithm~\ref{alg:ppareto-bfs}. The simplification lies in including only those
policies into the non-dominated set that are non-dominated with respect to all
other policies.
This decreases the number of candidate policies and, thus, the runtime at the
cost of possible inaccuracy. 

The practical implementation tries to find quickly policies that are 
spread over the Pareto frontier by analyzing first those policies which 
are likely to belong to the Pareto frontier. 
It is known that the policies $\pi_{\downarrow}, \overline{\pi}, \pi_{\uparrow}$
belong to $\Ppareto$, as no other policy has a greater value vector in the lower
bound, average, and upper bound case, respectively, by definition. 
Starting with these policies makes the algorithm walk through the 
policy space from the extreme points of the Pareto frontier which, as we 
hope, yields an evenly distributed non-dominated set of policies.

The following simple observation helps to find good policies which are 
candidates for the Pareto frontier without too many policy evaluations. 
Let for a policy $\pi$ and a state-action pair $(s, a)$ the \emph{gradient} be
\begin{equation}
\begin{aligned}
& \grad(\pi, s,a) = \left(
r^a_{\downarrow s} + \gamma \min_{\mat{p}_s^a \in \mat{p}_{\updownarrow s}^a}
\mat{p}_s^a \mat{v}_{\downarrow}^{(\pi)} - v_{\downarrow s}^{(\pi)}, \right. \\ & \left.
\overline{r}^a_s + \gamma \overline{\mat{p}}_s^a
\overline{\mat{v}}^{(\pi)} - \overline{v}_s^{(\pi)},
r^a_{\uparrow s} + \gamma \max_{\mat{p}_s^a \in \mat{p}_{\updownarrow s}^a}
\mat{p}_s^a \mat{v}_{\uparrow}^{(\pi)} - v_{\uparrow s}^{(\pi)}
\right).
\end{aligned}
\end{equation} 
If, for some policy 
$\pi$, 
a state $s$ and an action $a \in A \setminus \{\pi(s)\}$ can be found 
such that 
\begin{equation}
\label{eq:dominates}
  \grad(\pi, s, a) >_P \vec{0}
\end{equation}
then a policy $\pi^{(s,a)}$ can be defined with $\pi^{(s,a)}(t) = 
\pi(t)$ for $t \neq s$ and $\pi^{(s,a)}(s) = a$ and for 
$
  \pi^{(s,a)}$
it is
$\vec{v}^{(\pi^{(s, a)})} >_P \vec{v}^{(\pi)}$.
We define an operator 
\begin{equation*}
\pi' =
popt\left(\pi, 
  \vec{v}^{(\pi)}
\right)
\end{equation*}
that generates a new policy from $\pi$ by selecting for each $s$ the
lexicographically smallest
action $a$ that observes~\eqref{eq:dominates}, whenever this is 
possible. Looping $popt$ until convergence, which occurs in the repeat-loop in
lines 14--17, yields a locally optimal policy in the Pareto frontier.

The function $popt$ considers only pairs $(s,a)$ where the gradient is
positive in all components, making $\pi^{(s,a)}$ dominate $\pi$. In fact, the
gradient gives a hint how the new policy will behave. From the gradient, the 
direction of the different value vectors of a new policy can be 
estimated without evaluating it.
%
Policy evaluation is done by a function {\em eval\/} which solves the
following three sets of equations to compute the value vectors for some pure
policy $\pi$.
\begin{equation}
\begin{aligned}
\mat{u}_{\downarrow} = \mat{r}_{\downarrow}^{(\pi)} + \gamma
\min_{P \in P_{\updownarrow}^{(\pi)}} \left(P
\mat{u}_{\downarrow}\right), \,
\overline{\mat{u}} = \overline{\mat{r}}^{(\pi)} + \gamma
\overline{P}^{(\pi)} \overline{\mat{u}},\\ 
\mbox{and\;}\mat{u}_{\uparrow} = \mat{r}_{\uparrow}^{(\pi)} + \gamma
\max_{P \in P_{\updownarrow}^{(\pi)}} \left(P\mat{u}_{\uparrow}\right)
\end{aligned}
\end{equation}
The equations for the average values define a set of linear equations 
which can be solved with standard techniques. For the vector of the worst and 
best case an iterative approach is applied: The vectors are successively
computed by solving the LP and optimizing the matrix until
convergence~\cite{DBLP:journals/ai/GivanLD00}.


In our algorithm we use a set $PV$ that contains tuples $(\pi,
\mat{v}^{(\pi)})$. 
Additionally, we use a set $P$ where all evaluated policies are stored 
to avoid re-evaluations of policies. 
In the current description, new policies are generated 
starting from available policies by maximizing one direction of the 
gradient. 
The algorithm stops if in the current set of non-dominated policies, the neighbors of 
each policy are either explored or dominated. A further stopping 
condition is to explore a predefined number of policies.

\begin{algorithm}[!t]
  \small
  \caption{An heuristic for $\Ppareto$ and $\Vpareto$}
  \label{alg:ivi-pure-popt}
  \begin{algorithmic}[1]
    \Function{Pure-opt}{$\mathcal{P} = (S, A, T_{\updownarrow},
    R_{\updownarrow}, Pr)$, $\gamma$} 
      \State $PV \gets \{(\pi_{\downarrow}, \vec{v}_{\downarrow}),
      (\overline{\pi}, \overline{\vec v}),
      (\pi_{\uparrow}, \vec{v}_{\uparrow})\}$;
      \State $P \gets \{\pi_{\downarrow}, \overline{\pi},
      \pi_{\uparrow}\}$ ;
      \While{$|PV| <$ max.\ number of policies}
        \label{alg:pure-opt-search-start}
        \State $\Pi = \emptyset$ ;
        \For {$\pi \in PV$ and all $(s,a)$ where $\pi^{(s,a)} \notin P$}
          \State $(g^{(\pi,s,a)}_\downarrow, \overline{g}^{(\pi,s,a)},
          g^{(\pi,s,a)}_\uparrow) = \grad(\pi, s,a)$ ; $\Pi = \Pi \cup
          \left\{(\pi,s,a\right)\}$ ;
          \vh{changed above.}
        \EndFor
        \If {no non-negative gradient exists for $(\pi,s,a) \in \Pi$}
          \State break (all locally non-dominated solutions found);
        \EndIf
        \For{$g \in \cbr{g_{\downarrow}, \overline{g}, g_{\uparrow}}$}
          \If{$g^{(\pi, s, a)} > 0$ exists}
            \State $(\pi, s, a) \gets \arg\max_{(\pi, s, a) \in \Pi} \cbr{g^{(\pi, s, a)}}$ ;
            \State $\pi' \gets \pi^{(s, a)}$;
            \Repeat \Comment compute local optimum
            \State $\mat{v}^{(\pi')} = (\mat{v}_{\downarrow}, \overline{\mat{v}},
            \mat{v}_{\uparrow}) \gets eval(\mathcal{P}, \pi')$ ;
             \State $\pi' \gets popt(\pi', \vec{v}^{(\pi')})$;
            \Until{$\pi'$ does not change}
            \State $PV \gets PO\left(PV \cup \{(\pi', \mat{v}^{(\pi')})\}\right)$ ; 
            $P \gets P \cup \{\pi'\}$ ;
          \EndIf
        \EndFor
        \If{$PV$ was not changed}
          \State break (all new policies are explored or dominated)
        \EndIf
      \EndWhile \label{alg:pure-opt-search-end}
      \State \Return $PV$
    \EndFunction
  \end{algorithmic}
\end{algorithm}

As the algorithm is a heuristic, it is difficult to argue about 
guaranteed performance in terms of quality of the output, that is, if 
the resulting policies $P_r = \cbr{ \pi \mid \del{\pi, 
\cdot} \in PV}$ fulfill $\Ppareto \subseteq P_r$ and $P_r \subseteq 
\Ppareto$. It is still possible to make several observations. The main 
difference between Alg.~\ref{alg:ivi-pure-popt} and 
Alg.~\ref{alg:ppareto-bfs} lies in the bookkeeping that disallows 
Alg.~\ref{alg:ivi-pure-popt} to explore policies that have been 
already evaluated and, more importantly, are dominated by some other 
policy that has been found. By doing this, 
Alg.~\ref{alg:ivi-pure-popt} may ignore policies that are dominated 
yet lead (by choosing an appropriate sequence of policy changes) to the 
Pareto frontier; if there are no other ways to the Pareto frontier, this 
makes the output of Alg.~\ref{alg:ivi-pure-popt} incomplete. On 
the other hand, one can provide a heuristic argument: Since the initial 
set of policies contains known ``extreme points'' $\pi_{\downarrow}, \overline{\pi}, 
\pi_{\uparrow}$, the policies that will be found by 
Alg.~\ref{alg:ivi-pure-popt} in realistic settings will stem from a 
gradual transition from one extreme policy to another, as there will 
always exist a path of policies that improves one of the objectives until an
optimum is reached. This way, we can expect that in real-life 
problems, the resulting set $P_r$ will cover the Pareto frontier or at 
least the space between the value vector tuples $\vec{v}_\downarrow, 
\overline{\vec v}, \vec{v}_\uparrow$
adequately, i.\,e., the resulting set 
of value vectors will be evenly distributed in the space between the 
extreme value vectors stemming from $\pi_{\downarrow}, \overline{\pi},
\pi_{\uparrow}$. 
Furthermore, we expect that for practical problems, the following 
assumption will hold: If for a set of policies $P$ it is $P \subseteq 
\Ppareto$ and $P$ contains not all Pareto optimal policies, then there 
exists a policy $\pi \in P$ and a state-action pair $(s, a) \in S \times 
A$ such that $\pi^{(s, a)} \not\in P$ and $\pi^{(s, a)}$ is not 
dominated by any other policy in $P$. This especially means that there 
is always a ``globally'' non-dominated path of policies from a set $P$ 
of mutually non-dominating policies to a policy in $\Ppareto \setminus 
P$ if $\Ppareto \neq P$. We expect that this assumption holds for 
practical instances. Furthermore, we conjecture that our assumption is 
also true for the problem in general. 
Finally, since Algorithm~\ref{alg:ivi-pure-popt} is a variation of Algorithm~\ref{alg:ppareto-bfs}, the same complexity reasoning
applies here. However, the practical complexity should be lower than that of
Algorithm~\ref{alg:ppareto-bfs}, as less SBMDP evaluations have to be performed.

}

\section{Evaluation}
\label{sec:evaluation}
\vh{ToDo: Add a brief outline about two case studies.}
We have performed a series of experiments where we considered several questions.
First, we compared the performance of Algorithm~\ref{alg:ivi-pure-popt} against
a black-box multi-objective optimization method as reference. We have chosen
\emph{SPEA2}~\cite{ZLT01} as reference since it is a well-studied, simple
black-box optimization algorithm. Second, we have evaluated the
general performance of the algorithm with respect to problem size and the number
of computed solutions.

For the evaluation, we have used a machine with an 
Intel\textregistered{} Core\texttrademark{} i7-4790 CPU and \num{16} \si{GB} 
RAM\@. We have set a time limit of \num{1000} \si{s} for \emph{SPEA2} and 
a limit for Algorithm~\ref{alg:ivi-pure-popt} of \num{50000} checked 
policies. The archive size for \emph{SPEA2} has been set to 
\num{50000}. Concerning the implementations, we have used OpenMP 
parallelization methods to use multiple CPU cores when possible. 
Furthermore, we have used advanced numerical algorithms to
evaluate~\eqref{eq:upper-lower-bound} and~\eqref{eq:average}. 
Specifically, for large instances, we substituted the direct LU 
solver~\cite{Stew94} by preconditioned GMRES~\cite{Saad93,Stew94} with an
ILU0-preconditioner. The code and testing infrastructure are available
at~\cite{Sc16bmdptool} and~\cite{Sc16bmdpinfrastructure}.

\subsection{The Model of Multi-Server Queue}
\label{sec:evaluation:model}

As the first case study, we have chosen a parameterizable model that can be 
easily generated; more concretely, we have considered a multi-server 
queue where servers can be switched off to save energy and switched on 
if the load in the system increases. Such queues are abstract models for 
server farms~\cite{GaHA10}. The goal is to find a compromise between 
small response times and low energy consumption.

We consider a system where customers arrive according to a Poisson 
process with rate $\lambda$ and require an exponentially distributed 
service with mean $\mu^{-1}$. As our algorithms are designed for 
discrete-time BMDPs, we consider a (morally) equivalent discrete model 
where the probability of arrival of a customer in a time unit is $p$, 
the service probability is $q$ and, thus $\lambda$ and $\mu$ are 
multiples of $p^{-1}$ resp. $q^{-1}$. The system has a capacity of $m$ 
and contains $c$ servers. Each server can be in one of three states 
{\em on}, {\em off\/} and {\em start}. A server can be switched off after 
the end of a service or if it is idle. A server that is switched off 
immediately changes its state from {\em on\/} to {\em off}. Servers in 
state {\em off\/} can be switched on which means that they change their 
state to {\em start}. The duration of the starting period is 
exponentially distributed with rate $\nu$, then the server changes its 
state to {\em on\/} and is ready to serve customers. A state of the system 
can be described by $(i,j,k,l)$ where $i \in [0,m]$ describes the number 
of customers, $j,k,l$ include the number of servers in state {\em on}, 
{\em start\/} and {\em off}, respectively. Consequently, $j+k+l=c$ has to 
hold. The number of states equals $n=(m+1)(c+2)(c+1)/2$. The reward in 
the state $(i,j,k,l)$ equals $(m-i)/(j\omega_1 + k\omega_2 + l\omega_3) 
$ where $\omega_1,\omega_2$, $\omega_3$ describe the energy consumption 
in the {\em on}, {\em start}, and {\em off\/} state.

The upper and lower bounds for the transition probabilities were chosen randomly
by generating Gaussian noise and adding it to the transition probabilities
defined above.
 
\paragraph{Comparison to a generic heuristic.}
\label{sec:evaluation:comparison-spea2}
As we consider a new problem, we have chosen to compare our approach to a
black-box heuristic. More specifically, we have considered multi-objective
optimization heuristics and decided to compare our algorithm to SPEA2 as it is a
well-studied evolutionary optimization algorithm that is specifically designed
to compute non-dominated sets for multi-objective optimization problems.

\textbf{The SPEA2 algorithm.}
In detail, SPEA2 keeps two sets: a \emph{population} $P$ and an 
\emph{archive} $A$ where the non-dominated solutions are stored. In each 
iteration of the optimization cycle, $A$ is updated with non-dominated 
elements of $P$. Then, a selection step takes place in which first, all 
elements of $A \cup P$ are assigned a \emph{fitness value} and then, 
the solutions with lower fitness values are chosen to generate new solutions by
application of mutation and crossover operators. The newly generated solutions
are then the new population.

The distinctive feature of this approach is the fitness evaluation: The 
fitness of an individual solution $p$ depends on the \emph{strength} of 
other solutions $p'$ that cover $p$. The strength itself is defined as 
the number of covered solutions; thus, the non-dominated solutions have 
maximal strength and minimal fitness values by definition, otherwise the 
ranking aims at picking more diverse solutions, i.\,e., solutions that 
are more evenly distributed in the objective value space.

In our SPEA2 implementation,
we have defined problem-specific mutation and crossover operators. As 
possible solutions are stationary policies, the operators could be 
defined in a straightforward fashion. Mutation affects a decision in one 
state with probability $\nicefrac{1}{n}$, if $n$ is the number of states 
in the MDP, and replaces the previous action in the policy with a random 
one. Crossover takes two ``parent'' policies and chooses for the result 
an action from either of the original policies with probability 
$\nicefrac12$.

\textbf{Comparison metrics.} To quantify performance differences, we have used the coverage metric 
that has been introduced in~\cite{DBLP:journals/tec/ZitzlerT99}. This 
performance metric has been designed to compare two output sets of 
(heuristic) multi-objective optimization algorithms on the same 
problem and computes the fraction of one output that is covered 
(i.\,e., is dominated by or equal to) by an element of the other output 
set. Concretely, for two sets of vectors $X$ and $Y$, the coverage metric 
is defined by
$ C(X, Y) = \frac{1}{\envert{Y}}|{\cbr{y \in Y \mid \exists x \in X: x \ge 
y}}|$.
$C(X, Y) = 1$ means that all points in $Y$ are dominated by 
or equal to points in $X$ whereas $C(X, Y) = 0$ means that no 
point in $Y$ is covered by a point in $X$. It is worth noting that the 
coverage metric is asymmetric and in most cases it is interesting to 
compute both $C(X, Y)$ and $C(Y, X)$.

\textbf{Results.}
The results of the comparison can be found in Fig.~\ref{fig:c-ipt-evo}. The
first figure describes the coverage metric where the first argument is the
policy set computed by Alg.~\ref{alg:ivi-pure-popt}, the second figure describes
the converse. In the run, \emph{SPEA2} took always \num{1000} \si{s} while
Alg.~\ref{alg:ivi-pure-popt} did never take more than \num{330} \si{s}.


\begin{figure*}[ht]
  \begin{subfigure}{0.49\textwidth}
  	\resizebox{0.6\linewidth}{!}{
    \includegraphics[width=\textwidth]{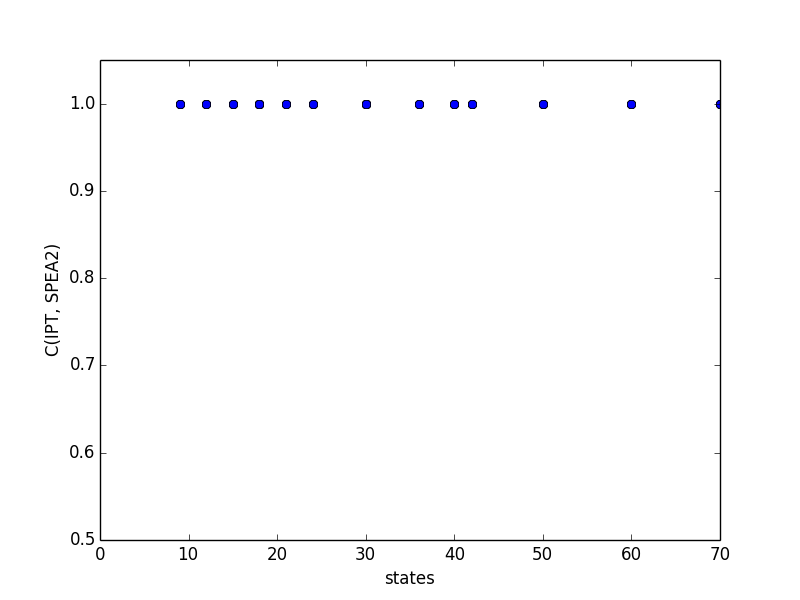}}
    \centering
    \caption{\footnotesize $C_{IE}$}
  \end{subfigure}
  \begin{subfigure}{0.49\textwidth}
  	\resizebox{0.6\linewidth}{!}{
    \includegraphics[width=\textwidth]{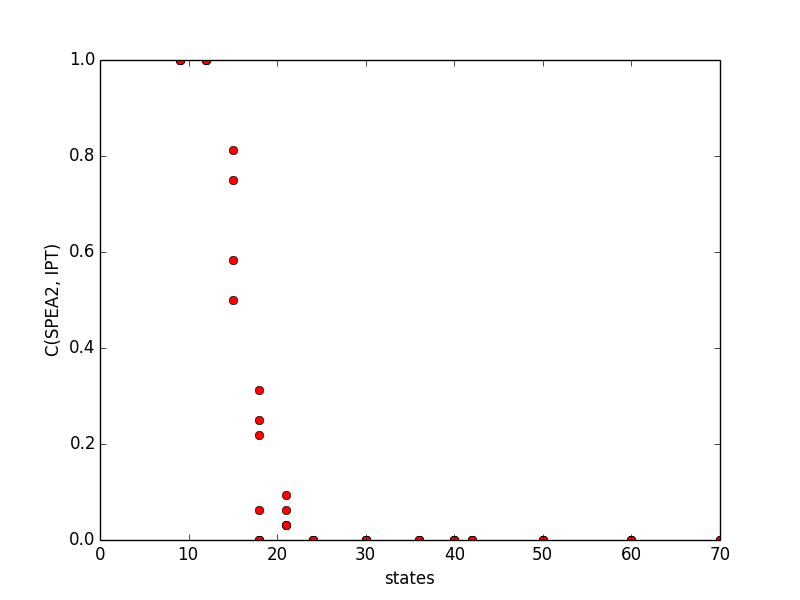}}
   \centering \caption{\footnotesize $C_{EI}$}
  \end{subfigure}
  \caption{\footnotesize The $C$-measures in dependence of state space size}
\label{fig:c-ipt-evo}
\end{figure*}
\ifthenelse{\boolean{long}}{
  The numeric results can be seen in Table~\ref{tbl:ipt-spea-c-metric}.
$i$ denotes the test case number, $T$ 
the time in seconds that Algorithm~\ref{alg:ivi-pure-popt} has used. 
The times for \emph{SPEA2} have not been shown as the algorithm has 
stopped on the timeout condition. $|P|$ describes the number of 
computed policies by each algorithm. $C_{IE}$ denotes the coverage 
metric where the first argument is the policy set computed by 
Alg.~\ref{alg:ivi-pure-popt} and the second one is the policy set 
computed by \emph{SPEA2}. $C_{EI}$ is the coverage metric with arguments 
swapped.
{\footnotesize

\begin{longtable}{l|l|l|l|l|l|l|l|l}
  \caption{Performance comparison of \emph{SPEA2} and 
Algorithm~\ref{alg:ivi-pure-popt}}
  \label{tbl:ipt-spea-c-metric} \\
  \hline
  \multicolumn{1}{c|}{$m$} &
  \multicolumn{1}{|c|}{$c$} & 
  \multicolumn{1}{|c|}{$|S|$} & 
  \multicolumn{1}{|c|}{$t$} & 
  \multicolumn{1}{|c|}{$T$ (Alg.~\ref{alg:ivi-pure-popt})} & 
  \multicolumn{1}{|c|}{$|P|$ (Alg.~\ref{alg:ivi-pure-popt})} &
  \multicolumn{1}{|c|}{$|P|$ (SPEA2)} & 
  \multicolumn{1}{|c|}{$C_{IE}$} & 
  \multicolumn{1}{|c}{$C_{EI}$} \\
  \hline
  \endfirsthead

  \multicolumn{1}{c|}{$m$} &
  \multicolumn{1}{|c|}{$c$} & 
  \multicolumn{1}{|c|}{$|S|$} & 
  \multicolumn{1}{|c|}{$t$} & 
  \multicolumn{1}{|c|}{$T$ (Alg.~\ref{alg:ivi-pure-popt})} & 
  \multicolumn{1}{|c|}{$|P|$ (Alg.~\ref{alg:ivi-pure-popt})} &
  \multicolumn{1}{|c|}{$|P|$ (SPEA2)} & 
  \multicolumn{1}{|c|}{$C_{IE}$} & 
  \multicolumn{1}{|c}{$C_{EI}$} \\
  \endhead
  \endfoot
  \endlastfoot
  $2$ & $1$ & $9$ & 
  1 & $0.1$ & 4 & 661 & $1.0$ & $1.0$ \\
  &&& 
  2 & $0.03$ & 4 & 831 & $1.0$ & $1.0$ \\
  &&& 
  3 & $<0.01$ & 2 & 557 & $1.0$ & $1.0$ \\
  &&& 
  4 & $<0.01$ & 8 & 1168 & $1.0$ & $1.0$ \\
  \hline
  $2$ & $2$ & $18$ & 
  1 & $0.04$ & 144 & 700 & $1.0$ & $0.0$
  \\
  &&&
  2 & $0.02$ & 90 & 273 & $1.0$ & $0.0$
  \\
  &&&
  3 & $0.01$ & 16 & 399 & $1.0$ & $0.0$
  \\
  &&&
  4 & $0.06$ & 192 & 235 & $1.0$ & $0.0$ 
  \\ \hline
  $2$ & $3$ & $30$ &
  1 & $3.16$ & 2048 & 1385 & $1.0$ & $0.0$
  \\ &&&
  2 & $8.83$ & 3456 & 1678 & $1.0$ & $0.0$
  \\ &&&
  3 & $27.9$ & 6480 & 2304 & $1.0$ & $0.0$
  \\ &&&
  4 & $32.09$ & 6912 & 2289 & $1.0$ & $0.0$
  \\ \hline
  $3$ & $1$ & $12$ & 
  1 & $0.01$ & 4 & 498 & $1.0$ & $1.0$
  \\
  &&&
  2 & $<0.01$ & 8 & 562 & $1.0$ & $1.0$
  \\
  &&&
  3 & $<0.01$ & 8 & 494 & $1.0$ & $1.0$
  \\
  &&&
  4 & $<0.01$ & 8 & 663 & $1.0$ & $1.0$
  \\ \hline
  $3$ & $2$ & $24$ & 
  1 & $30.01$ & 10368 & 1987 & $1.0$ & $0.0$
  \\ &&&
  2 & $1.21$ & 1728 & 1586 & $1.0$ & $0.0$
  \\ &&&
  3 & $0.95$ & 1536 & 1834 & $1.0$ & $0.0$
  \\ &&&
  4 & $0.57$ & 1024 & 1227 & $1.0$ & $0.0$
  \\ \hline
  $3$ & $3$ & 40 &
  1 & $134.25$ & 38626 & 3904 & $1.0$ & $0.0$
  \\ &&&
  2 & $170.0$ & 43446 & 1448 & $1.0$ & $0.0$
  \\ &&&
  3 & $163.02$ & 43555 & 3407 & $1.0$ & $0.0$
  \\ &&&
  4 & $158.31$ & 40326 & 2234 & $1.0$ & $0.0$
  \\ \hline
  $4$ & $1$ & 15 &
  1 & $0.01$ & 16 & 298 & $1.0$ & $0.8125$
  \\ &&&
  2 & $<0.01$ & 8 & 412 & $1.0$ & $0.75$
  \\ &&&
  3 & $<0.01$ & 4 & 839 & $1.0$ & $0.5$
  \\ &&&
  4 & $<0.01$ & 12 & 193 & $1.0$ & $0.583$
  \\ \hline
  $4$ & $2$ & 30 &
  1 & $124.77$ & 17281 & 1372 & $1.0$ & $0.0$
  \\ &&&
  2 & $15.94$ & 6144 & 1680 & $1.0$ & $0.0$
  \\ &&&
  3 & $0.86$ & 1152 & 2679 & $1.0$ & $0.0$
  \\ &&&
  4 & $0.18$ & 256 & 1145 & $1.0$ & $0.0$
  \\ \hline
  $4$ & $3$ & 50 &
  1 & $182.5$ & 44040 & 3110 & $1.0$ & $0.0$
  \\ &&&
  2 & $202.11$ & 43835 & 2873 & $1.0$ & $0.0$
  \\ &&&
  3 & $209.75$ & 39181 & 1345 & $0.9993$ & $0.0$
  \\ &&&
  4 & $196.69$ & 39348 & 2720 & $1.0$ & $0.0$
  \\ \hline
  $5$ & $1$ & 18 &
  1 & $0.01$ & 16 & 1466 & $1.0$ & $0.25$
  \\ &&&
  2 & $0.01$ & 16 & 1138 & $1.0$ & $0.3125$
  \\ &&&
  3 & $0.01$ & 32 & 1545 & $1.0$ & $0.21875$
  \\ &&&
  4 & $0.01$ & 16 & 1375 & $1.0$ & $0.0625$
  \\ \hline
  $5$ & $2$ & 36 &
  1 & $172.42$ & 44852 & 1931 & $1.0$ & $0.0$
  \\ &&&
  2 & $37.11$ & 8192 & 2310 & $1.0$ & $0.0$
  \\ &&&
  3 & $102.63$ & 13824 & 1867 & $1.0$ & $0.0$
  \\ &&&
  4 & $128.32$ & 38334 & 1970 & $0.9995$ & $0.0$
  \\ \hline
  $5$ & $3$ & 60 &
  1 & $195.88$ & 43520 & 3555 & $1.0$ & $0.0$
  \\ &&&
  2 & $233.39$ & 41321 & 3594 & $1.0$ & $0.0$
  \\ &&&
  3 & $203.18$ & 41957 & 2995 & $1.0$ & $0.0$
  \\ &&&
  4 & $186.82$ & 36129 & 2493 & $1.0$ & $0.0$
  \\ \hline
  $6$ & $1$ & 21 &
  1 & $0.02$ & 64 & 605 & $1.0$ & $0.09375$
  \\ &&&
  2 & $0.01$ & 32 & 977 & $1.0$ & $0.03125$
  \\ &&&
  3 & $0.01$ & 64 & 1130 & $1.0$ & $0.03125$
  \\ &&&
  4 & $0.01$ & 64 & 756 & $1.0$ & $0.0625$
  \\ \hline
  $6$ & $2$ & 42 &
  1 & $193.48$ & 43583 & 2307 & $0.9996$ & $0.0$
  \\ &&&
  2 & $197.19$ & 46716 & 2687 & $1.0$ & $0.0$
  \\ &&&
  3 & $148.53$ & 41420 & 1631 & $1.0$ & $0.0$
  \\ &&&
  4 & $254.92$ & 18432 & 2657 & $1.0$ & $0.0$
  \\ \hline
  $6$ & $3$ & 70 &
  1 & $355.77$ & 40955 & 2563 & $1.0$ & $0.0$
  \\ &&&
  2 & $321.47$ & 41570 & 2506 & $1.0$ & $0.0$
  \\ &&&
  3 & $330.95$ & 41739 & 3310 & $1.0$ & $0.0$
  \\ &&&
  4 & $330.95$ & 36618 & 3288 & $1.0$ & $0.0$
\end{longtable}}
}{}

It is easy to see that Algorithm~\ref{alg:ivi-pure-popt} almost 
always delivers a significantly better performance with respect to both 
time complexity as well as quality of computed policies. More 
concretely, in nearly all cases Algorithm~\ref{alg:ivi-pure-popt} has 
computed a set that completely covered all solutions generated by 
\emph{SPEA2}; the evolutionary heuristic, however, has never produced a 
policy that strictly dominated a policy from 
Alg.~\ref{alg:ivi-pure-popt} and was only able to yield comparable 
solutions on small instances with state space size of at most \num{20}.

\paragraph{Comparison to an exact computation.}
\label{sec:evaluation:comparison-exact}
For some instances, we have furthermore compared the performance of
Algorithm~\ref{alg:ivi-pure-popt} to the exact approach in
Algorithm~\ref{alg:ppareto-bfs}. Concretely, we have considered the case $m = 2,
c = 3$. It has turned out that for this case, the coverage metric was always
$1$. This suggests that Algorithm~\ref{alg:ivi-pure-popt} may compute the
complete Pareto frontier not only heuristically but also in theory. This is,
however, a conjecture subject to further investigation.

\subsection{Time complexity measurements}
\label{sec:evaluation:complexity}
As the number of policies was bounded by an upper limit of \num{50000} and $|A|
= o(|S|)$, the complexity can be roughly estimated by a cubic term in $|S|$. For
practical applications, we are also interested in runtimes on real-life
instances. For this, we have used a slightly different but more general (and
somewhat more scalable) model.
\vh{Shall we highlight the following as another case study, e.g., ``The Grid Model"?}
\ds{Does it look better now?}

\textbf{The grid model.}
We have chosen a model that resembles 
a grid with $n m$ states $S = \cbr{s_{i, j} \mid i \in [n], j \in 
[m]}$ and $m$ actions $A = [m]$. The rewards for actions 
in each state are chosen randomly with mean  \num{100} and variance \num{20}. 
The transition probabilities are also chosen randomly according to the Dirichlet
distribution. Concretely, the 
transition probability vector from state $s_{i, j}$ to states $s_{\min(n, i+1), 
j'}$ for action $a$ is Dirichlet-distributed with concentration 
parameters $\del{\alpha_1^{a}, \ldots, \alpha_m^{a}}$ where
$\alpha_{j'}^{a} = 10$ if $a = j'$ and $\alpha_{j'}^{a} = 1$ otherwise
which yields an (expected) 10 times larger probability to land in $s_{i+1,a}$
than in other states. The upper and lower bounds are, as before,
generated by adding Gaussian noise.

\textbf{Complexity.}
We present the results in graphical form. To get the results, we have 
run Algorithm~\ref{alg:ivi-pure-popt} on instances with up to \num{400} 
states, with $n$ and $m$ between 5 and 20. For each $(n, m)$, we 
have created \num{4} instances.

The results can be seen in Fig.~\ref{fig:pareto-total-time-plot} and 
Fig.~\ref{fig:pareto-time-per-policy-plot}. The red dots are the 
empirical data, the blue lines describe a confidence interval that 
stems from a (scaled) t-distribution guess. The green line is the 
cubic regression term for convenience. One can see that the experimental
performance generally matches a polynomial term, but also that even on large
instances, the mean time until a non-dominated solution is generated lies under
a second.
\begin{figure*}[ht]
	\centering
	\begin{subfigure}[b]{.5\textwidth}
		\centering
		\resizebox{0.6\linewidth}{!}{\includegraphics{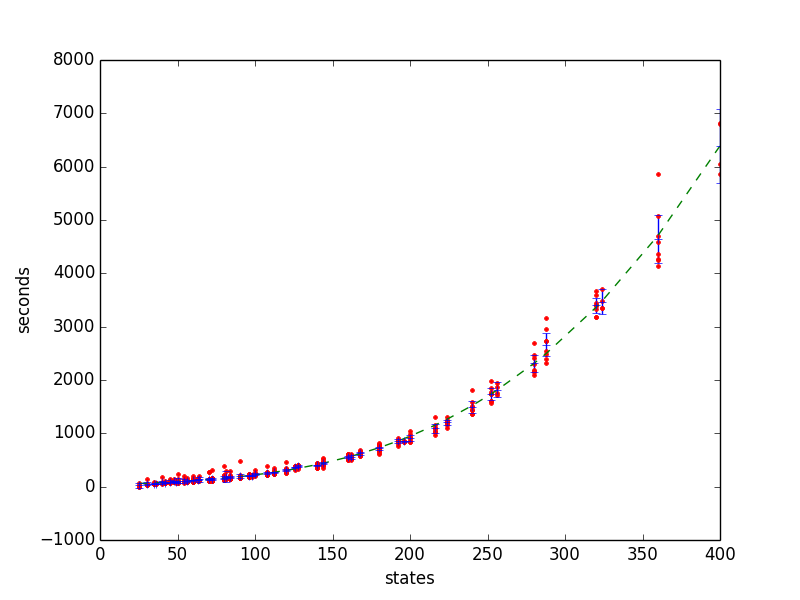}}
		\caption{\footnotesize Total time in dependence of problem size}
		\label{fig:pareto-total-time-plot}
	\end{subfigure}%
	  \hfill
	\begin{subfigure}[b]{0.5\textwidth}
		\centering
		\resizebox{0.6\linewidth}{!}{\includegraphics{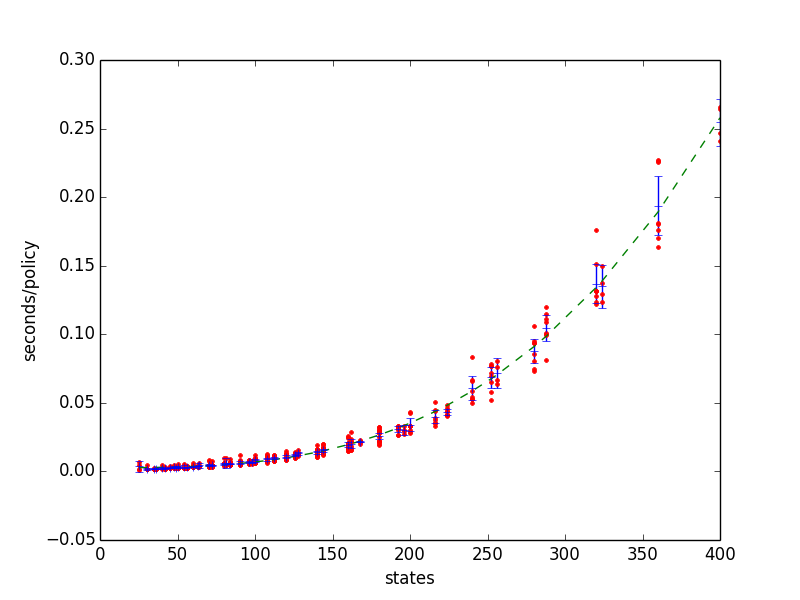}}
		\caption{\footnotesize Mean time for a policy in dependence of problem size}
		\label{fig:pareto-time-per-policy-plot}
	\end{subfigure}
	\caption{\footnotesize Time complexity measurements on the grid model}
	\label{fig:grid-model}
\end{figure*}
%

\subsection{The Model of Autonomous Nondeterministic Tour
	Guides}
Our second case study is inspired by ``Autonomous Nondeterministic Tour
Guides'' (ANTG) in~\cite{CRI07,hashemi2016reward}, which models a complex
museum with a variety of collections. 

Models in~\cite{CRI07} are MDPs. In our experiment, we will insert some
uncertainties into the MDP. Due to the popularity of the
museum, there are many visitors at the same time. Different visitors
may have different preferences of arts. We assume the museum divides
all collections into different categories so that visitors can choose
what they would like to visit and pay tickets according to their
preferences. In order to obtain the best experience, a visitor can first
assign certain weights to all categories denoting their preferences to
the museum, and then design the best strategy for a target. However,
the preference of a sort of arts to a visitor may depend on many
factors like price, weather, or the length of queue at that moment
etc., hence it is hard to assign fixed values to these preferences. In
our model we allow uncertainties of preferences such that their values
may lie in an interval. 

For simplicity we assume all collections are organized in an $n\times
n$ square with $n\ge 10$. Let $m=\frac{n-1}{2}$. We assume all collections
at $(i,j)$ are assigned with a weight 1 if $\abs{i-m}>\frac{n}{5}$ or
$\abs{j-m}>\frac{n}{5}$, with a weight 2 if
$\abs{i-m}\in(\frac{n}{10},\frac{n}{5}]$ or $\abs{j-m}\in(\frac{n}{10},\frac{n}{5}]$; otherwise they are assigned with a weight interval
$[3,4]$. In other words, we expect 
collections in the middle will be more popular and subject to more
uncertainties than others. Furthermore, we assume that people at each location $(i,j)$
have two non-deterministic choices: either move to the north and east, that is, $(\min(n, i+1), \min(n,
j+1))$
or to the north and west, that is, $(\min(n, i+1), \max(0, j-1))$
if $i\ge j$, while if $i\le j$, they can move
either to the north and east, 
$(\min(n, i+1), \min(n, j+1))$
, or to the south and east, that is, $(\max(0, i-1), \min(n, j+1))$%
. The transitions also depend on the location of the collection. For the
collections in the middle, the main direction of transition is chosen with
probability $[0.8, 1]$ while the probability to move to some other neighbor
collection is $[0, 0.2]$. In the expected case, we set the probability to move
to the collection in the main direction to $0.8$ and distribute the remaining
probability mass evenly among other neighbor collections. For collections
outside the middle, the main direction (for example, north and west) is chosen
with probability $1$.

Therefore a model with parameter $n$ has $n^2$ states in total and
roughly $2n^2$ transitions, 2\% of which are associated with uncertain weights
and uncertain transition probabilities. Notice that a transition with uncertain
weights essentially corresponds to several transitions with concrete weights. 

We define a reward structure denoting the reward one can obtain by visiting each
collection. For simplicity, we let the reward be the same as
the weight of a collection. 
We can ask for the optimal policy for the expected discounted reward
criterion, that is, in the scenario where it is preferable to make better
rewarding moves early.


\paragraph{Evaluation of the ANTG model}

We have performed an evaluation of the model for $10 \le n \le 20$, with the
results depicted in Fig.~\ref{fig:tour-guide-evaluation}. For convenience, the
runtime and the number of policies are plotted in dependence of the number of
states, which is $n^2$. We see that on large instances, the problem structure
yields a large number of optimal policies, thus decreasing performance. On small
instances, however, the number of optimal policies generated is small, which
allows for a fast computation of the Pareto frontier.

\begin{figure}[ht]
  \centering
  \scalebox{1}{
  \begin{tikzpicture}[scale=0.6]
    \pgfplotsset{
      y axis style/.style={
        yticklabel style=#1,
        ylabel style=#1,
        y axis line style=#1,
        ytick style=#1
      }
    }
    \begin{axis}[xlabel={$n^2$}, ylabel={$t$}, axis y line*=left,
      legend pos=north west]
      \addplot[mark=*,blue, only marks] coordinates {
        (100, 0.09)
        (121, 6.03)
        (144, 0.07)
        (169, 68.01)
        (196, 0.12)
        (225, 0.14)
        (256, 1064.63)
        (289, 0.41)
        (324, 2849.56)
        (361, 0.49)
        (400, 5337.49)
      }; \label{plot-time}
    \end{axis}
    \begin{axis}[xlabel={$n^2$}, ylabel={\#policies}, axis y line*=right, 
        ylabel near ticks,
        axis x line=none, legend pos=north west]
      \addlegendimage{/pgfplots/refstyle=plot-time}\addlegendentry{$t$}
      \addplot[mark=x, orange, only marks] coordinates {
        (100, 4)
        (121, 257)
        (144, 4)
        (169, 1025)
        (196, 3)
        (225, 1)
        (256, 4096)
        (289, 1)
        (324, 14579)
        (361, 1)
        (400, 22457)
      }; \addlegendentry{\#policies} \label{plot-n}
    \end{axis}
  \end{tikzpicture}
  }
  \caption{\footnotesize Evaluation of the ANTG model}
  \label{fig:tour-guide-evaluation}
\end{figure}

\section{Conclusions}
\label{sec:outro}
\ifthenelse{\boolean{long}}{
Summing up, we have presented a novel approach to analyze 
bounded-parameter Markov decision processes. In contrast to known 
approaches~\cite{DBLP:journals/ai/GivanLD00} that usually analyze the 
worst case behavior and result in a variant of robust optimization, the 
problem is handled here as a multi-objective problem. Of course, the 
price for this extension can be an exponential complexity due to an 
exponential number of {\em optimal\/} and mutually incomparable policies 
in this case. The problem differs from many other multi-objective 
optimization approaches for MDPs where several expected values are 
analyzed, whereas here worst, expected and best case behavior are 
considered together which together yields a \emph{multi-scenario\/} 
optimization problem. 

For the problem of finding all Pareto optimal policies in 
multi-objective optimization of BMDPs, two computational algorithms are 
presented. One of the algorithms computes the desired set of policies 
exactly, but has a prohibitive runtime complexity. The second algorithm 
that is explicitly designed for the problem is a heuristic. Although it 
also cannot avoid exponential complexity, if the result size is 
exponential, it seems to perform well on a large number of test 
instances. We have compated our heuristic to a generic 
evolutionary multi-objective optimization algorithm with promising 
results; the policies obtained by the heuristic were in almost all 
cases better than those produced by the evolutionary algorithm.

The approach presented here can be extended in various directions. As 
mentioned it is fairly easy to consider additional goals beyond worst, 
average and best cases. Our results not only apply to bounded-parameter 
Markov decision processes, but can also be utilized for Markov decision 
processes with convex uncertainties%
~\cite{DBLP:conf/cav/PuggelliLSS13}. Then, 
the same basic algorithms can be used in order to compute Pareto optimal 
policies, if one adjusts them to solve the convex program in the 
iteration step.

For future research, it would be interesting to consider the 
scalarization problem where the goal is to optimize the weighted 
sum of the value vectors; as this problem seems to be non-local and 
non-unimodal, it may need new methods for problem analysis. 
Also, the presented results may be reused 
in order to develop algorithms for other optimality criteria such as 
expected gain.
}
{
In this paper, we have presented a novel approach to analyze 
bounded-parameter Markov decision processes. In contrast to known 
approaches~\cite{DBLP:journals/ai/GivanLD00} that usually analyze the 
worst case behavior and result in a variant of robust optimization, the 
problem is handled here as a multi-objective problem. Of course, the 
price for this extension can be an exponential complexity due to an 
exponential number of {\em optimal\/} and mutually incomparable policies 
in this case. The problem differs from many other multi-objective 
optimization approaches for MDPs where several expected values are 
analyzed, whereas here worst, expected and best case behavior are 
considered together which together yields a \emph{multi-scenario\/} 
optimization problem. 

In order to find all Pareto optimal policies in multi-objective optimization of BMDPs, 
two computational algorithms are presented. One of the algorithms computes the desired set of policies 
exactly, but has a prohibitive runtime complexity. The second algorithm 
that is explicitly designed for the problem is a heuristic and seems to perform well on a 
large number of test instances. In particular, we have shown that the policies obtained by the heuristic 
were in almost all cases better than those produced by a generic 
evolutionary multi-objective optimization algorithm.  
%
%

The approach presented here can be extended in various directions. As 
mentioned it is fairly easy to consider additional goals beyond worst, 
average and best cases. Our results not only apply to bounded-parameter 
Markov decision processes, but can also be utilized for Markov decision 
processes with convex uncertainties
~\cite{DBLP:conf/cav/PuggelliLSS13}. Therefore, 
the same basic algorithms can be used in order to compute Pareto optimal 
policies, if one adjusts them to solve the convex program in the 
iteration step.

For future research, it would be interesting to consider the 
stochastic multi-scenario problem for stochastic BMDPs. There, the goal is to
optimize the expected value of the value vector, in contrast to the value vector
of the expected MDP we have used here. This problem is a slightly different,
single-objective optimization problem and needs to be considered separately.
}
\appendix

\ifthenelse{\boolean{complexity}}{%
\section{Proof of Theorem~\ref{thm:pareto-nphard}}
\label{sec:proof-hardness}

  The stated problem is obviously in \NP{} because the evaluation of a
policy for a SBMDP can be done in polynomial time. As for \NP-hardness,
we show a reduction from the subset sum problem. Given a subset sum
instance $M = \cbr{m_1, \ldots, m_n}$ we construct a SBMDP and two
vectors $\vec v_1, \vec v_2$ such that there is a policy $\pi$ with
$\vec v_{\downarrow}^{(\pi)} \ge \vec v_1, \overline{\vec v}^{(\pi)} \ge
\vec v_2$ if and only if there is a subset $I \subseteq [n]$ such that
$\sum_{i \in I} m_i = \frac{1}{2} \sum_{i = 1}^n m_i$.

  The SBMDP which we shall construct will have an arbitrary nonzero
discount factor $\gamma \in \intoo{0,1}$, $4n + 1$ states which have the
identifiers $t$ and $q_i, \overline{s}_i, s_{\downarrow i}, s_{\uparrow
i}$ for $i \in [n]$.
The general idea of the reduction is the following: The SBMDP shall
proceed through all states $q_i$ and end up in the absorbing state $t$.
In the state $q_i$, it shall be possible to choose between the (adjusted
for the discount factor) reward pairs $\del{2m_i,0}$ and $\del{m_i,
m_i}$ with the first component being the average case and the second
component being the worst case part. This way, a pure policy $\pi$ will
induce a subset $I \subseteq [n]$ such that the total reward, if one
starts in state $q_1$, will be $\del{\sum_{i=1}^n m_i + \sum_{i \in I}
m_i, \sum_{i \not\in I} m_i}$. Technically, we model this by enabling
two actions in states $q_i$, $a$ and $b$. These actions do not generate
rewards, but lead to different outcomes: The action $a$ leads to the
state $\overline s_i$ unconditionally with reward $0$, and all actions
from $\overline s_i$ lead to the state $q_{i+1}$, if $i < n$, or $t$,
if $i = n$, with reward $\frac{m_i}{\gamma^{2i-1}}$. The action $b$
leads to either $s_{\downarrow i}$ or $s_{\uparrow i}$, with probability
in the interval $\intcc{0,1}$; all actions from these two states lead,
in turn, to $q_{i+1}$ (or $t$, if $i = n$), and the difference between
$s_{\downarrow i}$ and $s_{\uparrow i}$ lies in the reward: the reward
in $s_{\downarrow i}$ is $0$, the reward in $s_{\uparrow i}$ is
$\frac{4m_i}{\gamma^{2i-1}}$.

  Finally, we define the expected discounted rewards we would like to
get with a pure policy $\pi$, it should be $\frac12 \sum_{i \in [n]}
m_i$ in the worst case and $\frac32 \sum_{i \in [n] } m_i$ in the
average case.

  \begin{figure}[htp]
    \centering
    \begin{tikzpicture}[auto,>=stealth',node distance=2cm]
      \node[circle,draw] (qi) at (0,0) {$q_i$};
      \coordinate (qui) at (0,1.1);
      \node[circle,draw] (sui) at (1.1,2.2) {$s_{\uparrow i}$};
      \node[circle,draw] (sli) at (-1.1,2.2) {$s_{\downarrow i}$};
      \coordinate (qli) at (0,-1.1);
      \node[circle,draw] (sai) at (1.1, -2.2) {$\overline s_i$};
      \node[circle,draw] (qi1) at (3.3,0) {$q_{i+1}$};

      \path[-] (qi) edge [] node {$b, 0$} (qui);
      \path[-] (qi) edge [] node {$a, 0$} (qli);
      
      \path[->] (qli) edge [bend right] node {$p = 1$} (sai);
      \path[->] (sai) edge [bend right] node [swap] {$\cdot,
\frac{m_i}{\gamma^{2i-1}}, p = 1$} (qi1);

      \path[->] (qui) edge [bend left] node [swap] {$p \in \intcc{0,1}$}
(sui);
      \path[->] (qui) edge [bend right] node {$p \in \intcc{0,1}$}
(sli);
      \path[->] (sui) edge [bend left=60] node {$\cdot,
\frac{4m_i}{\gamma^{2i-1}}, p = 1$} (qi1);
      \path[->] (sli) edge [bend left=60] node {$\cdot, 0, p = 1$}
(qi1);
    \end{tikzpicture}
    \caption{Construction from Theorem~\ref{thm:pareto-nphard}}
  \end{figure}

It is easy to see that the construction can be done in polynomial time.
We want now to show correctness. For every subset $I \subseteq [n]$ we
define a policy $\pi_I$ with $\pi_I(q_i) = a$ if $i \in I$ and
$\pi_I(q_i) = b$ if $i \not \in I$. The expected discounted total reward
for policy $\pi_I$ in state $q_1$ will then be $\sum_{i \in I}
\gamma^{2i-1} \cdot \frac{m_i}{\gamma^{2i-1}} + \sum_{i \in [n]
\setminus I} \gamma^{2i-1} \cdot \frac{2m_i}{\gamma^{2i-1}} = \sum_{i
\in [n]} m_i + \sum_{i \in [n] \setminus I} m_i$ in the average case
and $\sum_{i \in I} \gamma^{2i-1} \cdot \frac{m_i}{\gamma^{2i-1}} =
\sum_{i \in I} m_i$ in the worst case. If there is a subset $I \subseteq
[n]$ such that $\sum_{i \in I} m_i = \sum_{i \in [n] \setminus I} m_i$,
then there is a policy $\pi_I$ that yields the requested expected
discounted rewards. Furthermore, as any pure policy $\pi$ induces a
subset $I \subseteq [n]$ by defining $i \in I \Leftrightarrow \pi_{q_i}
= a$, the existence of a pure policy $\pi$ with the requested expected
discounted rewards implies the existence of a subset with sum $\frac12
\sum_{i \in [n]} m_i$.

It is easy to see that the proof technique can also be applied to the
combination of best-case and expected-case measures, and best-case and
worst-case measures.
}{}

\ifthenelse{\boolean{nonstationary}}{%
\section{Proof of Theorem~\ref{thm:pareto-error-bounds}}
\label{sec:proof-error-bounds}

Since the policies computed in Algorithm~\ref{alg:ivi-pareto-popt} are
deterministic, they can be represented by vectors $\pi_i$ for the
decisions in the $i$-th ($1 \le i \le k$) step. Each
$(\mat{v}_{\downarrow}, \overline{\mat{v}}, \mat{v}_{\uparrow}) \in V^k$
results from a specific sequence $\pi_1,\ldots,\pi_k$.
$(\mat{v}_{\downarrow}, \overline{\mat{v}}, \mat{v}_{\uparrow})$ has
been computed starting with value vector $(\mat{0},\mat{0},\mat{0})$
$k$ steps apart. Now assume that we start with $(\mat{w}_{\downarrow},
\overline{\mat{w}}, \mat{w}_{\uparrow})$ instead of
$(\mat{0},\mat{0},\mat{0})$ and use the same policy
$\pi_1,\ldots,\pi_k$, then the resulting value vectors are
\[
\begin{array} {lll}
\mat{v}_{\downarrow} + \gamma^{k} \prod\limits_{i=1}^k  
  P^{(\pi_i)}\mat{w}_{\downarrow} & \le \mat{v}_{\downarrow} +
\gamma^{k} \max_{s \in S} \left(\mat{w}_{s \downarrow}\right)\idvt ,\\
\overline{\mat{v}} + \gamma^{k} \prod\limits_{i=1}^k P^{(\pi_i)}
  \overline{\mat{w}} & \le \overline{\mat{v}} + \gamma^{k} \max_{s \in
S}
  \left(\overline{\mat{w}}_s\right)\idvt & \mbox{ and }\\
\mat{v}_{\uparrow} + \gamma^{k} \prod\limits_{i=1}^k  
  P^{(\pi_i)}\mat{w}_{\uparrow} & \le \mat{v}_{\uparrow} +
\gamma^{k} \max_{s \in S} \left(\mat{w}_{s \uparrow}\right)\idvt.
\end{array}
\]
Consequently, $(\mat{v}_{\uparrow}, \overline{\mat{v}},
\mat{v}_{\downarrow})$ contains the discounted reward accumulated in the
last $k$ steps and is a lower bound for the discounted reward
accumulated over an infinite number of steps. Since $V^k$ contains all
Pareto optimal vectors reachable after $k$ steps, $v^*_{\downarrow}$ is
the maximal lower bound of the accumulated reward in $k$ steps,
$\overline{v}^*$ is the maximal average reward that can be accumulated
in $k$ steps and $v^*_{\uparrow}$ is the maximal upper bound of the
accumulated reward in $k$ steps. This implies
\[\]
\begin{array}{llll}
\max_{s \in S} \left(\mat{w}_{s \downarrow} \right) & \le
\sum\limits_{i=0}^\infty \left(\gamma^{k} \right)^i v_{\downarrow}^* &
= \frac{1}{1-\gamma^{k}}v_{\downarrow}^* , \\
\max_{s \in S} \left(\overline{\mat{w}}_s \right) & \le
\sum\limits_{i=0}^\infty \left(\gamma^{k} \right)^i \overline{v}^* & =
\frac{1}{1-\gamma^{k}}\overline{v}^* & \mbox{ and }\\
\max_{s \in S} \left(\mat{w}_{s \uparrow} \right) & \le
\sum\limits_{i=0}^\infty \left(\gamma^{k} \right)^i v_{\uparrow}^* &
= \frac{1}{1-\gamma^{k}}v_{\uparrow}^*
\end{array}
\]
for every value vector $(\mat{w}_{\downarrow}, \overline{\mat{w}},
\mat{w}_{\uparrow})$ reachable by an arbitrary policy and also for every
value vector $(\mat{w}_{\downarrow}, \overline{\mat{w}},
\mat{w}_{\uparrow}) \in \mathcal{V}_{opt}$. 

Let $(\mat{u}_{\downarrow}, \overline{\mat{u}}, \mat{u}_{\uparrow}) \in
\mathcal{V}_{opt}$ be a Pareto optimal pair of vectors resulting from
policy $f = \pi_1,\ldots, \pi_k, \ldots \in \mathcal{P}_{opt}$. First,
assume that $\pi_1,\ldots,\pi_k$ is a policy that has been used in
Algorithm~\ref{alg:ivi-pareto-popt} to compute $(\mat{v}_{\downarrow},
\overline{\mat{v}}, \mat{v}_{\uparrow}) \in V^k$. Then
$\overline{\mat{v}}$ contains the average reward accumulated during the
last $k$ steps by policy $f$. Additionally, $\overline{\mat{u}}$
contains the discounted reward accumulated in the steps more than $k$
steps apart. A bound for this reward is given above and this bound is
weighted by $\gamma^{k}$ which results in the bound given in the
theorem. With the same arguments the bound for the worst and best case
behavior can be derived.

Now assume that $f = \pi_1,\ldots, \pi_k; \ldots \in \mathcal{P}_{opt}$
but $\pi_1,\ldots,\pi_k$ has not been used to compute a vector from
$V^k$. Then $(\mat{u}_{\downarrow}, \overline{\mat{u}},
\mat{u}_{\uparrow})$ consists of a part $(\mat{w}_{\downarrow},
\overline{\mat{w}}, \mat{w}_{\uparrow})$ that includes the reward
accumulated during the last $k$ steps and the rest which can be bounded
as in the previous case. However, since $V^k$ contains all Pareto
optimal vectors with respect to the reward accumulated in $k$ steps, it
contains a vector pair $(\mat{v}_{\downarrow}, \overline{\mat{v}},
\mat{v}_{\uparrow})$ with $\mat{v}_{\downarrow} \ge
\mat{w}_{\downarrow}$, $\overline{\mat{v}} \ge \overline{\mat{w}}$ and
$\mat{v}_{\uparrow} \ge \mat{w}_{\uparrow}$. These vectors plus the
bound for the reward accumulated in the steps $k+1, \ldots$ define a
bound for $(\mat{u}_{\downarrow}, \overline{\mat{u}},
\mat{u}_{\uparrow})$. 
}{}

%
\ifthenelse{\boolean{acknowledgments}}{\section*{Acknowledgments}
This work is supported by the DFG as part of RTG 1855 and by the ERC Advanced Grant 695614 (POWVER).}{}
%
%
\bibliographystyle{ACM-Reference-Format}
\bibliography{biblio}
\ifthenelse{\boolean{final}}{}{%
  \section*{Appendix} 
\appendix 

This appendix contains the proofs of the results enunciated in the main part of the paper.

\section{Proofs of the Results Enunciated in the Paper} 
\label{app:proofs-of-theorems}
\lemmaNondominatedPath*
\begin{proof}
  We provide a proof by induction on $d(\pi, \pi')$. For $d(\pi,
  \pi') \in \cbr{0, 1}$, the statement holds obviously. 

  For $d(\pi, \pi') = c > 1$, the induction hypothesis is that 
  the statement holds for $c - 1$. This means that for each policy 
  $\pi_1$ with distance $d(\pi_1, \pi') = c-1$ there exists a sequence of 
  policies $\pi_1, \pi_2, \ldots, \pi_{c} = \pi'$ such that for any two 
  adjacent policies $\pi_i, \pi_{i+1}$ it is $\vec{v}^{(\pi_i)} \not> 
  \vec{v}^{(\pi_{i+1})}$.

  To show the induction step, we must infer the statement for $d(\pi, 
  \pi') = c$. Suppose now for the sake of contradiction that it is not 
  the case. 
  We observe that under this assumption, for each state $s \in S$, the policy
  $\pi^{(s, \pi'(s))}$ that results from changing $\pi$ in state $s$ to choose
  action $\pi'(s)$ results in a value vector that is dominated by
  $\vec{v}^{(\pi)}$, i.\,e., $\vec{v}^{(\pi)} > \vec{v}^{(\pi^{(s, \pi'(s))})}$.
  Let us now consider a restricted SBMDP $\mathcal{P}^{[\pi, \pi']} = (S,
  A^{[\pi, \pi']}, \bp{T}^{[\pi, \pi']}, \bp{R}^{[\pi, \pi']})$ where the
  available actions are only those used in either $\pi$ or $\pi'$, that is,
  $A^{[\pi, \pi']} = \cbr{a, b}$ and the matrices $P$ in $\bp{T}^{[\pi, \pi']}$
  are constructed with $p^{[\pi, \pi'] a}_{s,s'} = p^{\pi(s)}_{s, s'}$ and
  $p^{[\pi, \pi'] b}_{s,s'} = p^{\pi'(s)}_{s, s'}$. The reward function is
  defined analogously by $\bp{R}^{[\pi, \pi']} = \del{ (\vec{r}^{[\pi, \pi']
    a}_{\downarrow}, \vec{r}^{[\pi, \pi'] a}_{\uparrow}), (\vec{r}^{[\pi, \pi']
  b}_{\downarrow}, \vec{r}^{[\pi, \pi'] b}_{\uparrow}) }$ with 
  \begin{align*} 
    \vec{r}^{[\pi, \pi'] a}_{\downarrow s} & = \vec{r}^{\pi(s)}_{\downarrow s},
    \vec{r}^{[\pi, \pi'] a}_{\uparrow s} = \vec{r}^{\pi(s)}_{\uparrow s} \\
    \vec{r}^{[\pi, \pi'] b}_{\downarrow s} & = \vec{r}^{\pi'(s)}_{\downarrow s},
    \vec{r}^{[\pi, \pi'] b}_{\uparrow s} = \vec{r}^{\pi'(s)}_{\uparrow s}.
  \end{align*}

  It is easy to see that the policies $\pi$ and $\pi'$ can be executed in the
  new SBMDP $\mathcal{P}^{[\pi, \pi']}$. As all action changes from $\pi$ lead
  to smaller value vectors in each component, we can see that $\pi$ is locally
  optimal for each component, and thus, $\pi$ is optimal for all components.
  Hence, $\pi$ is an optimal policy in $\mathcal{P}^{[\pi, \pi']}$. Furthermore,
  $\pi'$ is then dominated by $\pi$ in all states and all components in
  $\mathcal{P}^{[\pi, \pi']}$ as well as in $\mathcal{P}$. Consequently, $\pi'$
  cannot lie on the Pareto frontier, which contradicts the initial assumption. 

  As we have arrived at a contradiction, we conclude that there must exist a
  state $s$ where it is $\vec{v}^{(\pi)} \not> \vec{v}^{(\pi^{(s, \pi'(s))})}$, and,
  since $d(\pi^{(s, \pi'(s))}, \pi') = c - 1$ and $d(\cdot, \cdot)$ can never
  exceed $|S|$, there exists, by induction hypothesis, a sequence of policies
  $\pi^{(s, \pi'(s))} = \pi_1, \pi_2, \ldots, \pi_c = \pi'$ for which
  $\vec{v}^{(\pi_i)} \not> \vec{v}^{(\pi_{i+1})}$. As $d(\pi, \pi^{(s,
  \pi'(s))}) = 1$, this concludes the proof.
\end{proof}
\CorrectnessExactAlgorithm*
\begin{proof}
	The correctness of the algorithm follows from
	Lemma~\ref{lma:non-dominated-path}. In detail,
	Algorithm~\ref{alg:ppareto-bfs} stores a set $P$ of policies. In the
	$i$-th step, the set $P$ is updated with policies that have distance $1$
	from already computed policies in $P$ and distance $i$ from $\pi_0$; a
	further constraint restricts the policies to be non-dominated by their
	``parent'' in $P$. This way, after $i$ steps $P$ contains all policies
	with distance $i$ from $\pi_0$ that follow a non-dominated path. By
	computing the non-dominated subset of currently found policies in line~\ref{alg:ppareto-bfs-updatefrontier}, 
	we maintain a set of mutually non-dominated policies that are reachable on a non-dominated path from
	$\pi_0$. By Lemma~\ref{lma:non-dominated-path}, this captures all
	policies from $\Ppareto$.
\end{proof}
\ifthenelse{\boolean{long}}{
\begin{proof}
  \vh{ToDo: Check the proof.}
  The complexity of Alg.~\ref{alg:ppareto-simple} is analyzed as follows.  In
  lines~\ref{alg:ppareto-simple-iteration-start}--\ref{%
  alg:ppareto-simple-iteration-end}, $|P| = \bigO{|F||S||A|}$ policies are
  generated. For desired precision of $\epsilon$ and $B$ bits of representation
  of a floating-point value, $\bigO{\frac{B + \log \frac{1}{\epsilon} + \log
  \frac{1}{1-\gamma} + 1}{1 - \gamma}} = \bigO{\frac{B + \log \frac{1}{\epsilon}
  + \log \frac{1}{\gamma}}{1 - \gamma}}$ value update iterations are
  needed~\cite{DBLP:conf/uai/LittmanDK95}. A value update iteration can be
  performed in $\bigO{|S|^2 + |S| \log |S|}$ time steps: We need to compute the
  permutation of the value vector in $|S| \log |S|$ time steps, and then, with
  this knowledge, we have to update each state with the scalar product of the
  transition probability vector and the value vector. Together, this yields a
  total complexity of policy evaluation of all generated policies of
  $\bigO{|F| |S|^3 |A| \frac{B + \log \nicefrac{1}{\epsilon} + \log
  \frac{1}{1-\gamma}}{1-\gamma}}$.

  The complexity of computing the resulting non-dominated set that will replace
  $F$ is then $\bigO{{(|F||S||A|)}^2}$~\cite{DBLP:journals/tec/ZhangTCJ15};
  thus, the total runtime can be bounded by $\mathcal{O}(R ({(I|S||A|)}^2 +
  \frac{I|S|^3|A| (B + \log \nicefrac{1}{\epsilon} + \log \frac{1}{1 -
  \gamma})}{1 - \gamma}))$, if $R$ is the result set size and $I$ is the size of
  the largest intermediate set of non-dominated solutions. This implies cubic
  complexity in the size of the largest intermediate set in the worst case.
  Heuristically, it seems also intuitive to assume that $I$ cannot be much
  larger than the resulting set, i.\,e., $I$ and $R$ are coupled by the relation
  $I \le cR$ where $c$ is a small constant.  Furthermore, we assume that
  $\epsilon$ and $B$ are fixed. These assumptions reduce the runtime to $\bigO{
  R^3 |S|^2 |A|^2 + \frac{R^2|S|^3|A| \log \frac{1}{1 - \gamma}}{1 - \gamma} }$,
  which is roughly cubic in the resulting set size and quadratic in the size of
  state and action spaces for constant discount factors as it is usually $|S|,
  |A| = o(R)$.
\end{proof}
\section{Theoretical Complexity of Multi-Objective Optimization for SBMDPs}
\label{app:complexity}

In this section, we establish some complexity results about multi-objective optimization for SBMDPs.
These allow us, first, to constrain the algorithmic possibilities,
and second, to clearly define optimization goals.


We consider the \emph{canonical decision} problem. Informally, it amounts to
finding a policy that meets given minimum performance bounds.

\begin{definition}[\textbf{Canonical decision problem}]
	Given a vector $\vec{u} \in \R^c$ and a multi-objective optimization
	problem in a general form
	\[ 
	\max_{\vec x \in X} f_1(\vec{x}), f_2(\vec{x}), \ldots, f_c(\vec{x}),
	\]
	the \emph{canonical decision problem} is defined to decide, if a vector
	$\vec{x}^*$ such that $\vec{x}^* \in X$ and $f_j(\vec{x}^*) \ge u_j$ for
	all $j \in [c]$ exists.
\end{definition}

In the context of SBMDPs solving the canonical decision problem would 
mean finding, given vectors $\vec{u}_\downarrow, \overline{\vec{u}}, 
\vec{u}_\uparrow$, a policy $\pi$ such that the worst case, expected, 
and best case value vectors $\vec{v}_\downarrow^{(\pi)}, 
\overline{\vec{v}}^{(\pi)}, \vec{u}_\uparrow^{(\pi)}$ under $\pi$ are at 
least as large as the respective bounds $\vec{u}_\downarrow, 
\overline{\vec{u}}, \vec{u}_\uparrow$. It has been shown by 
\cite{DBLP:conf/stacs/ChatterjeeMH06} that the corresponding decision
problem is \NP-hard for general multi-objective MDPs where different
expected values are computed. The following theorem shows that the
hardness result also holds for SBMDPs even if rewards do not depend on
the chosen action and even if only two objectives are constrained by the
decision problem.

\begin{theorem}
  \label{thm:pareto-nphard}
  Deciding existence of a pure policy $\pi$ for a given stochastic
  bounded-parameter Markov decision process that delivers a worst-case
  expected discounted reward $\vec v_{\downarrow}^{(\pi)}$ and an
  average-case expected discounted reward $\overline{\vec v}^{(\pi)}$
  such that $\vec{v}_{\downarrow}^{(\pi)} \ge \vec{v}_1,
  \overline{\vec{v}}^{(\pi)} \ge \vec v_2$ for given vectors $\vec{v}_1,
  \vec{v}_2$ is \NP-complete.
\end{theorem}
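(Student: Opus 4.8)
The plan is to prove both membership in NP and NP-hardness. For membership, I would use the pure policy $\pi$ itself as the certificate. Given such a policy, its average-case value vector $\overline{\vec{v}}^{(\pi)}$ is obtained by solving the linear system induced by the expected MDP $(\overline{P}^{(\pi)}, \overline{\vec{r}}^{(\pi)})$, and its worst-case value vector $\vec{v}_\downarrow^{(\pi)}$ is obtained by the iterative LP-based policy evaluation over the uncertainty set, which converges in polynomial time (as established for BMDPs in the cited work). Comparing these two vectors componentwise against $\vec{v}_1$ and $\vec{v}_2$ then certifies a yes-instance in polynomial time, so the problem lies in NP.

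For NP-hardness I would reduce from Subset-Sum. Given positive integers $m_1, \dots, m_n$ with total $\Sigma = \sum_i m_i$, the task is to decide whether some $I \subseteq [n]$ has $\sum_{i \in I} m_i = \frac12 \Sigma$. I would build a chain gadget: a fresh absorbing state $t$ and, for each $i$, a decision state $q_i$ together with auxiliary states $\overline{s}_i, s_{\downarrow i}, s_{\uparrow i}$, with $q_i$ feeding into $q_{i+1}$ (and $q_n$ into $t$). In $q_i$ two actions are available. Action $a$ moves deterministically through $\overline{s}_i$, whose reward is set to $m_i/\gamma^{2i-1}$; because the path is deterministic, the worst and the average case collect the same discount-cancelled contribution $m_i$. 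Action $b$ moves to $s_{\downarrow i}$ or $s_{\uparrow i}$ with transition probabilities ranging over the full interval $[0,1]$, where the reward is $0$ in $s_{\downarrow i}$ and $4m_i/\gamma^{2i-1}$ in $s_{\uparrow i}$, and the measure $Pr$ is chosen so that $\overline{P}$ realizes a $\frac12$/$\frac12$ expected split. Thus action $b$ yields worst-case contribution $0$ (the adversary routes to $s_{\downarrow i}$) and average-case contribution $2m_i$. The scaling factor $\gamma^{-(2i-1)}$ cancels the discounting accrued along the chain, so each item's effective contribution is a clean multiple of $m_i$, independent of $\gamma$.

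A pure policy $\pi$ then corresponds bijectively to the subset $I = \{i : \pi(q_i) = a\}$. Evaluating from the start state $q_1$, the worst-case value is $\sum_{i \in I} m_i$ and the average value is $\sum_i m_i + \sum_{i \notin I} m_i = \Sigma + \sum_{i \notin I} m_i$. I would set the targets so that at $q_1$ the decision problem demands worst-case $\ge \frac12 \Sigma$ and average-case $\ge \frac32 \Sigma$ (with trivially satisfiable bounds at the other states). These two requirements read $\sum_{i \in I} m_i \ge \frac12\Sigma$ and $\sum_{i \notin I} m_i \ge \frac12\Sigma$; since they sum to $\Sigma$, both can hold only as equalities, i.e.\ exactly when $I$ splits the set into two equal halves. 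Hence a qualifying pure policy exists if and only if the Subset-Sum instance is a yes-instance, and the construction is plainly polynomial.

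The main obstacle is the gadget design that places the two objectives in genuine tension while keeping the arithmetic exact: the deterministic action must tie the worst and average contributions together, whereas the uncertain action must let the adversary annihilate the worst-case reward yet leave precisely half of it in the average case. This requires the discount-cancelling reward scaling so that contributions are clean multiples of $m_i$, together with a choice of $Pr$ producing the exact $\frac12$ expected split for action $b$. Once these are fixed, the equivalence and the NP-membership argument are routine. I would finally note, as the construction suggests, that the same template adapts to the best-/average-case and best-/worst-case pairings, so the hardness does not hinge on the particular two objectives selected.
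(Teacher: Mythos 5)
Your proposal is correct and follows essentially the same route as the paper's own proof: the identical chain gadget with states $q_i, \overline{s}_i, s_{\downarrow i}, s_{\uparrow i}$, discount-cancelling rewards $m_i/\gamma^{2i-1}$ and $4m_i/\gamma^{2i-1}$ with the expected $\tfrac12$/$\tfrac12$ split (so the two actions yield worst/average pairs $(m_i,m_i)$ versus $(0,2m_i)$), the same subset-sum targets $\tfrac12\Sigma$ and $\tfrac32\Sigma$ forced to equalities, and the same membership argument via polynomial-time policy evaluation. The only differences are cosmetic, e.g.\ you make explicit the trivially satisfiable bounds at non-initial states and the choice of $Pr$, which the paper leaves implicit.
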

\begin{proof}
  \label{sec:proof-hardness}
  The stated problem is obviously in \NP{} because the evaluation of a
  policy for a SBMDP can be done in polynomial time. As for \NP-hardness,
  we show a reduction from the subset sum problem. Given a subset sum
  instance $M = \cbr{m_1, \ldots, m_n}$ we construct a SBMDP and two
  vectors $\vec v_1, \vec v_2$ such that there is a policy $\pi$ with
  $\vec v_{\downarrow}^{(\pi)} \ge \vec v_1, \overline{\vec v}^{(\pi)} \ge
  \vec v_2$ if and only if there is a subset $I \subseteq [n]$ such that
  $\sum_{i \in I} m_i = \frac{1}{2} \sum_{i = 1}^n m_i$.

  The SBMDP which we shall construct will have an arbitrary nonzero discount
  factor $\gamma \in \intoo{0,1}$, $4n + 1$ states which have the identifiers
  $t$ and $q_i, \overline{s}_i, s_{\downarrow i}, s_{\uparrow i}$ for $i \in
  [n]$.
  The general idea of the reduction is the following: The SBMDP shall proceed
  through all states $q_i$ and end up in the absorbing state $t$. In the state
  $q_i$, it shall be possible to choose between the (adjusted for the discount
  factor) reward pairs $\del{0, 2m_i}$ and $\del{m_i, m_i}$ with the first
  component being the worst case and the second component being the average case
  reward. This way, a pure policy $\pi$ will induce a subset $I \subseteq [n]$
  such that the expected total rewards, if one starts in state $q_1$, will be
  $v_{\downarrow q_1}^{(\pi)} = \sum_{i \not\in I} m_i$ and
  $\overline{v}_{q_1}^{(\pi)} = \sum_{i=1}^n m_i + \sum_{i \in I} m_i$.
  Technically, we model this by enabling two actions in states $q_i$, $a$ and
  $b$. These actions do not generate rewards, but lead to different outcomes:
  The action $a$ leads to either $s_{\downarrow i}$ or $s_{\uparrow i}$, with
  probability in the interval $\intcc{0,1}$ and the expected value for this
  probability being $\nicefrac{1}{2}$; all actions from these two states lead,
  in turn, to $q_{i+1}$ (or $t$, if $i = n$), and the difference between
  $s_{\downarrow i}$ and $s_{\uparrow i}$ lies in the reward: the reward in
  $s_{\downarrow i}$ is $0$, the reward in $s_{\uparrow i}$ is
  $\frac{2m_i}{\gamma^{2i-1}}$.
  The action $b$ leads to the state $\overline s_i$ unconditionally with reward
  $0$, and all actions from $\overline s_i$ lead to the state $q_{i+1}$, if $i <
  n$, or $t$, if $i = n$, with reward $\frac{m_i}{\gamma^{2i-1}}$. 

  Finally, we define the expected discounted rewards we would like to get with a
  pure policy $\pi$, it should be $\frac12 \sum_{i \in [n]} m_i$ in the worst
  case and $\frac32 \sum_{i \in [n] } m_i$ in the average case.

  \begin{figure}[htp]
    \centering
    \begin{tikzpicture}[auto,>=stealth',node distance=2cm]
      \node[circle,draw] (qi) at (0,0) {$q_i$};
      \coordinate (qui) at (0,1.1);
      \node[circle,draw] (sui) at (1.1,2.2) {$s_{\uparrow i}$};
      \node[circle,draw] (sli) at (-1.1,2.2) {$s_{\downarrow i}$};
      \coordinate (qli) at (0,-1.1);
      \node[circle,draw] (sai) at (1.1, -2.2) {$\overline s_i$};
      \node[circle,draw] (qi1) at (3.3,0) {$q_{i+1}$};
      
      \path[-] (qi) edge [] node {$a, 0$} (qui);
      \path[-] (qi) edge [] node {$b, 0$} (qli);
        
      \path[->] (qli) edge [bend right] node {$p = 1$} (sai);
      \path[->] (sai) edge [bend right] node [swap] {$\cdot,
                \frac{m_i}{\gamma^{2i-1}}, p = 1$} (qi1);
        
      \path[->] (qui) edge [bend left] node [swap] {$p \in
                \intcc{0,1}, \overline{p} = \nicefrac12$}
      (sui);
      \path[->] (qui) edge [bend right] node {$p \in \intcc{0,1}, \overline{p} =
      \nicefrac12$}
      (sli);
      \path[->] (sui) edge [bend left=60] node {$\cdot,
              \frac{2m_i}{\gamma^{2i-1}}, p = 1$} (qi1);
      \path[->] (sli) edge [bend left=60] node {$\cdot, 0, p = 1$}
      (qi1);
    \end{tikzpicture}
    \caption{Construction from Theorem~\ref{thm:pareto-nphard}}
  \end{figure}

  It is easy to see that the construction can be done in polynomial time.  We
  want now to show correctness. For every subset $I \subseteq [n]$ we define a
  policy $\pi_I$ with $\pi_I(q_i) = a$ if $i \in I$ and $\pi_I(q_i) = b$ if $i
  \not \in I$. The expected discounted total reward for policy $\pi_I$ in state
  $q_1$ will then be $\sum_{i \in I} \gamma^{2i-1} \cdot
  \frac{m_i}{\gamma^{2i-1}} + \sum_{i \in I} \gamma^{2i-1} \cdot
  \frac{m_i}{\gamma^{2i-1}} = \sum_{i \in [n]} m_i + \sum_{i \in I} m_i$ in the
  average case and $\sum_{i \in [n] \setminus I} \gamma^{2i-1} \cdot
  \frac{m_i}{\gamma^{2i-1}} = \sum_{i \in [n] \setminus I} m_i$ in the worst
  case. If there is a subset $I \subseteq [n]$ such that $\sum_{i \in I} m_i =
  \sum_{i \in [n] \setminus I} m_i$, then there is a policy $\pi_I$ that yields
  the requested expected discounted rewards. Furthermore, as any pure policy
  $\pi$ induces a subset $I \subseteq [n]$ by defining $i \in I \Leftrightarrow
  \pi(q_i) = a$, the existence of a pure policy $\pi$ with the requested
  expected discounted rewards implies the existence of a subset with sum
  $\frac12 \sum_{i \in [n]} m_i$.
%
\end{proof}
}
\ifthenelse{\boolean{nonstationary}}{%
	\section{Proof of Theorem~\ref{thm:pareto-error-bounds}}
	\label{sec:proof-error-bounds}
	
	Since the policies computed in Algorithm~\ref{alg:ivi-pareto-popt} are
	deterministic, they can be represented by vectors $\pi_i$ for the
	decisions in the $i$-th ($1 \le i \le k$) step. Each
	$(\mat{v}_{\downarrow}, \overline{\mat{v}}, \mat{v}_{\uparrow}) \in V^k$
	results from a specific sequence $\pi_1,\ldots,\pi_k$.
	$(\mat{v}_{\downarrow}, \overline{\mat{v}}, \mat{v}_{\uparrow})$ has
	been computed starting with value vector $(\mat{0},\mat{0},\mat{0})$
	$k$ steps apart. Now assume that we start with $(\mat{w}_{\downarrow},
	\overline{\mat{w}}, \mat{w}_{\uparrow})$ instead of
	$(\mat{0},\mat{0},\mat{0})$ and use the same policy
	$\pi_1,\ldots,\pi_k$, then the resulting value vectors are
	\[\]
	\begin{array} {lll}
		\mat{v}_{\downarrow} + \gamma^{k} \prod\limits_{i=1}^k  
		P^{(\pi_i)}\mat{w}_{\downarrow} & \le \mat{v}_{\downarrow} +
		\gamma^{k} \max_{s \in S} \left(\mat{w}_{s \downarrow}\right)\idvt ,\\
		\overline{\mat{v}} + \gamma^{k} \prod\limits_{i=1}^k P^{(\pi_i)}
		\overline{\mat{w}} & \le \overline{\mat{v}} + \gamma^{k} \max_{s \in
			S}
		\left(\overline{\mat{w}}_s\right)\idvt & \mbox{ and }\\
		\mat{v}_{\uparrow} + \gamma^{k} \prod\limits_{i=1}^k  
		P^{(\pi_i)}\mat{w}_{\uparrow} & \le \mat{v}_{\uparrow} +
		\gamma^{k} \max_{s \in S} \left(\mat{w}_{s \uparrow}\right)\idvt.
	\end{array}
	\]
	Consequently, $(\mat{v}_{\uparrow}, \overline{\mat{v}},
	\mat{v}_{\downarrow})$ contains the discounted reward accumulated in the
	last $k$ steps and is a lower bound for the discounted reward
	accumulated over an infinite number of steps. Since $V^k$ contains all
	Pareto optimal vectors reachable after $k$ steps, $v^*_{\downarrow}$ is
	the maximal lower bound of the accumulated reward in $k$ steps,
	$\overline{v}^*$ is the maximal average reward that can be accumulated
	in $k$ steps and $v^*_{\uparrow}$ is the maximal upper bound of the
	accumulated reward in $k$ steps. This implies
	\[\]
	\begin{array}{llll}
		\max_{s \in S} \left(\mat{w}_{s \downarrow} \right) & \le
		\sum\limits_{i=0}^\infty \left(\gamma^{k} \right)^i v_{\downarrow}^* &
		= \frac{1}{1-\gamma^{k}}v_{\downarrow}^* , \\
		\max_{s \in S} \left(\overline{\mat{w}}_s \right) & \le
		\sum\limits_{i=0}^\infty \left(\gamma^{k} \right)^i \overline{v}^* & =
		\frac{1}{1-\gamma^{k}}\overline{v}^* & \mbox{ and }\\
		\max_{s \in S} \left(\mat{w}_{s \uparrow} \right) & \le
		\sum\limits_{i=0}^\infty \left(\gamma^{k} \right)^i v_{\uparrow}^* &
		= \frac{1}{1-\gamma^{k}}v_{\uparrow}^*
	\end{array}
	\]
	for every value vector $(\mat{w}_{\downarrow}, \overline{\mat{w}},
	\mat{w}_{\uparrow})$ reachable by an arbitrary policy and also for every
	value vector $(\mat{w}_{\downarrow}, \overline{\mat{w}},
	\mat{w}_{\uparrow}) \in \mathcal{V}_{opt}$. 
	
	Let $(\mat{u}_{\downarrow}, \overline{\mat{u}}, \mat{u}_{\uparrow}) \in
	\mathcal{V}_{opt}$ be a Pareto optimal pair of vectors resulting from
	policy $f = \pi_1,\ldots, \pi_k, \ldots \in \mathcal{P}_{opt}$. First,
	assume that $\pi_1,\ldots,\pi_k$ is a policy that has been used in
	Algorithm~\ref{alg:ivi-pareto-popt} to compute $(\mat{v}_{\downarrow},
	\overline{\mat{v}}, \mat{v}_{\uparrow}) \in V^k$. Then
	$\overline{\mat{v}}$ contains the average reward accumulated during the
	last $k$ steps by policy $f$. Additionally, $\overline{\mat{u}}$
	contains the discounted reward accumulated in the steps more than $k$
	steps apart. A bound for this reward is given above and this bound is
	weighted by $\gamma^{k}$ which results in the bound given in the
	theorem. With the same arguments the bound for the worst and best case
	behavior can be derived.
	
	Now assume that $f = \pi_1,\ldots, \pi_k; \ldots \in \mathcal{P}_{opt}$
	but $\pi_1,\ldots,\pi_k$ has not been used to compute a vector from
	$V^k$. Then $(\mat{u}_{\downarrow}, \overline{\mat{u}},
	\mat{u}_{\uparrow})$ consists of a part $(\mat{w}_{\downarrow},
	\overline{\mat{w}}, \mat{w}_{\uparrow})$ that includes the reward
	accumulated during the last $k$ steps and the rest which can be bounded
	as in the previous case. However, since $V^k$ contains all Pareto
	optimal vectors with respect to the reward accumulated in $k$ steps, it
	contains a vector pair $(\mat{v}_{\downarrow}, \overline{\mat{v}},
	\mat{v}_{\uparrow})$ with $\mat{v}_{\downarrow} \ge
	\mat{w}_{\downarrow}$, $\overline{\mat{v}} \ge \overline{\mat{w}}$ and
	$\mat{v}_{\uparrow} \ge \mat{w}_{\uparrow}$. These vectors plus the
	bound for the reward accumulated in the steps $k+1, \ldots$ define a
	bound for $(\mat{u}_{\downarrow}, \overline{\mat{u}},
	\mat{u}_{\uparrow})$. 
}{}


\ifthenelse{\boolean{complexity}}{%
	\subsection{Pareto optimality measures}
	\label{sec:pareto-hardness}
	
	By generalizing the considered problem as a variant of a multi-objective
	Markov decision process problem, with the components being the
	worst-case, best-case, or average-case expected discounted rewards, we
	can consider general measures for optimality in multi-objective
	optimization problems. Hence, it is tempting to consider known
	techniques for multi-objective optimization in this context.
}{}



\ifthenelse{\boolean{nonstationary}}{%
	Up to now we have only considered optimal stationary policies and in
	our setting pure policies are of special importance. However, if we
	allow general policies which consider the history, then additional
	Pareto optimal solutions may be found. This will be shown by a simple
	example (shown in Fig.~\ref{ex:simple}). The SBMDP has two states,
	in state $1$ one can choose between action $a$ and $b$. If $a$ is
	selected the process goes with probability $p$, which is uniformly $[0,1]$
	distributed, into state $2$ and stays with probability $1-p$ in state
	$1$. If $b$ is selected, then the process makes with probability $q$,
	which is uniformly $[0.5,0.7]$ distributed, a transition to state $2$
	and stays with probability $1-q$ in state $1$. From state $2$ the
	process always returns to state $1$ with probability $1$ no matter
	which action has been chosen.  The reward vector is $\vec{r}^a =
	\vec{r}^b = {(1, 0)}^\top$. 
	
	\begin{figure}[htp]
		\centering
		\begin{tikzpicture}[auto,>=stealth',node distance=2cm]
		\node[circle,draw] (1) at (0,3) {$1$};
		\node[circle,draw] (2) at (2,0) {$2$};
		
		\path[->] (1) edge [bend left=60] node {a$[0,1]$, b$[0.5,0.7]$} (2);      
		\path[->] (2) edge [bend right=-60] node {*$[1,1]$} (1);
		\path[->] (1) edge [loop left=90] node {a$[0,1]$, b$[0.3,0.5]$} (1);
		
		\end{tikzpicture}
		\caption{Simple example SBMDP}
		\label{ex:simple}
	\end{figure}
	
	Since decisions in state $2$ are irrelevant, we have two pure policies
	namely $(a)$ which chooses always action $a$ and $(b)$ that chooses
	always $b$. Obviously, $(a)$ is better for maximizing the average 
	reward and $b$ is better for the worst case reward. For $\gamma=0.9$ the corresponding
	values are $\vec{v}_1^{(a)} = \left(v_{1
		\downarrow}^{(a)}, \overline{v}_1^{(a)}, v_{1\uparrow}^{(a)}\right) = \left(5.2632,
	6.8966, 10.000\right)$ and  
	$\vec{v}_1^{(b)} = \left(v_{1 \downarrow}^{(b)}, \overline{v}_1^{(b)},
	v_{1 \uparrow}^{(b)}\right) =
	\left(6.1350, 6.4935, 6.8966\right)$. Both vectors
	are Pareto optimal. Now consider the non-stationary polices $ab$ and
	$ba$ which collect alternating sequences of the actions $a$ and
	$b$ in state $1$. We have $\vec{v}_1^{(ab)} = \left(v_{1 \downarrow}^{(ab)},
	\overline{v}_1^{(ab)},v_{1 \uparrow}^{(ab)}\right) = \left(5.2632,
	6.7024, 7.8684\right)$ and 
	$\vec{v}_1^{(ba)} = \left(v_{1 \downarrow}^{(ba)},
	\overline{v}_1^{(ba)}, v_{1 \uparrow}^{(ba)}\right) = \left(5.5913,
	6.6398, 7.6316\right)$. $\vec v_1^{(ab)}$ is dominated by
	$\vec{v}_1^{(a)}$ whereas $\vec v_1^{(ba)}$ is another Pareto optimal
	vector which cannot be reached with any stationary policy. By
	extending the history, we obtain additional Pareto optimal
	policies. Figure~\ref{fig0} shows the Pareto optimal value vectors for state $1$
	according the minimum and maximum which have been computed with the
	algorithm presented below. For an relative error of $0.001$, $155$
	Pareto optimal value vectors are computed for the minimum and maximum
	and $428$ Pareto optimal vectors considering all three goals, minimum,
	maximum and average value. 
	
	\begin{figure}
		\centering
		\includegraphics[width=118mm]{First.pdf} 
		\caption{Pareto optimal values for the gain of state $1$ in the
			simple example.}
		\label{fig0}
	\end{figure}
	
	We define $\Popt$ as the set of Pareto optimal policies. In general, 
	$\Popt$ can be an infinite set but since decisions $k$ steps apart 
	receive a discounting factor $\gamma^k$, the influence of past decisions 
	vanishes exponentially. Again we denote by $\Vopt$ the set of value 
	vectors belonging to policies from $\Popt$.
}{}

}
\end{document}